\def\eqref#1{equation~\ref{#1}}
\def\1{\bm{1}}
\DeclareMathAlphabet{\mathsfit}{\encodingdefault}{\sfdefault}{m}{sl}
\SetMathAlphabet{\mathsfit}{bold}{\encodingdefault}{\sfdefault}{bx}{n}
\newenvironment{claim}[1]{\par\noindent\underline{Claim:}\space#1}{}
\newtheorem{definition}{Definition}
\title{Let's do the time-warp-attend: \\ Learning topological invariants of dynamical systems
}
\author{
Noa Moriel\thanks{Equal contribution.}\\
The Hebrew University\\
\texttt{noa.moriel@mail.huji.ac.il}
\And
Matthew Ricci\footnote[1]{Equal contribution.}\\
The Hebrew University\\
\texttt{matthew.ricci@mail.huji.ac.il}
\And
Mor Nitzan\thanks{Corresponding author.}\\
The Hebrew University\\
\texttt{mor.nitzan@mail.huji.ac.il}
}
\newcommand{\fref}[1]{Fig. \ref{#1}}
\newcommand{\tref}[1]{Table \ref{#1}}
\newcommand{\sref}[1]{see Appendix \ref{#1}}
\begin{document}

\maketitle

\begin{abstract}
Dynamical systems across the sciences, from electrical circuits to ecological networks, undergo qualitative and often catastrophic changes in behavior, called bifurcations, when their underlying parameters cross a threshold. Existing methods predict oncoming catastrophes in individual systems but are primarily time-series-based and struggle both to categorize qualitative dynamical regimes across diverse systems and to generalize to real data. To address this challenge, we propose a data-driven, physically-informed deep-learning framework for classifying dynamical regimes and characterizing bifurcation boundaries based on the extraction of topologically invariant features. We focus on the paradigmatic case of the supercritical Hopf bifurcation, which is used to model periodic dynamics across a wide range of applications. Our convolutional attention method is trained with data augmentations that encourage the learning of topological invariants which can be used to detect bifurcation boundaries in unseen systems and to design models of biological systems like oscillatory gene regulatory networks. We further demonstrate our method's use in analyzing real data by recovering distinct proliferation and differentiation dynamics along pancreatic endocrinogenesis trajectory in gene expression space based on single-cell data. Our method provides valuable insights into the qualitative, long-term behavior of a wide range of dynamical systems, and can detect bifurcations or catastrophic transitions in large-scale physical and biological systems.
\end{abstract}

\textbf{Keywords:} dynamical systems, bifurcations, topological invariance, Hopf bifurcation, \\ physics-informed machine learning, augmentation, single-cell RNA-sequencing.

\section{Introduction}

Gradually changing the parameters of a dynamical system will typically result in innocuous, quantitative alterations to the system's behavior. However, in exceptional cases, parameter perturbations will result in a qualitative, potentially catastrophic change in the system's dynamical behavior called a \emph{bifurcation}. 
Bifurcations may lead to extreme consequences, as when a beam buckles and breaks under increasing weight \citep{strogatz1994dynamics}, when disastrous oscillations presage the collapse of poorly designed bridges, like the infamous Tacoma Narrows Bridge \citep{billah1991tacoma,deng2016bridge}, 
or when an airplane exceeds a speed threshold, leading to dangerous self-excited oscillations in its tail  \citep{weisshaar2012aeroelasticity, dimitriadis2017aero}.
In biological systems, bifurcations marking the transition away from oscillatory dynamics on cellular or organ-level scale, such as those regulating blood pumping and circadian rhythms, can have dire health consequences \citep{xiong2023physiological,qu2014cardio}. 

Understanding bifurcations which transition between dynamical regimes, be they in physics, engineering, or biology, is a subject of intense research in the study of nonlinear systems \citep{kuznetsov1998appliedbifur, karniadakis2021physics, ghadami2022datareview}. One of the central challenges across all applications is that distinct dynamical regimes can be qualitatively identical (e.g. the emergence of oscillations) while being quantitatively very different (e.g. the amplitude or frequency of those oscillations); smoothly warping the state space has no qualitative effect on the dynamics. Put another way, a detector for different dynamical regimes and the transitions between them must be selective for \emph{topological} structure of underlying dynamics but invariant to the \emph{geometric} nuisances of those dynamics. Predicting the onset of such bifurcations often requires the explicit use of governing equations, which must be hardwired or inferred. Hence, the analysis of real data, where such equations are often unknown or difficult to define, is particularly challenging \citep{kuznetsov1998appliedbifur}. Applications to real data either use hardwired \citep{wissel1984bifurstats,wiesenfeld1985bifurstats,scheffer2009bifurstats} or learned \citep{bury2023dlbifurpredict} time series features, but no approach, to our knowledge, explicitly addresses the selectivity-invariance trade-off inherent to the topological nature of the problem. Additionally, where vector field data is readily available and time-series tracing is not, e.g., in capturing gene expression of cells in high-throughput single-cell experiments, it remains unclear how to extract topological information either from such noisy and sparse vector fields directly or from their integration.



We address this challenge by learning a universal dictionary of dynamical features which is explicitly encouraged to be invariant to geometric nuisances but selective for the topological features which define a particular type of bifurcation (\fref{fig:overview}).  This is achieved primarily by the use of \textit{topological data augmentation}: each sample of our training data is a velocity field which comes bundled with numerous deformed but identically labeled versions. The features extracted from these augmented samples are then passed to a supervised classifier which must contend with the substantial geometric nuisances induced by the augmentation scheme. We demonstrate our method for the ubiquitous supercritical Hopf bifurcation whereby a stable fixed point (or ``equilibrium'') loses stability and transitions into a regime exhibiting a stable limit cycle. We train on synthetic, augmented data generated from a simple oscillator prototype. Despite being only trained on data from one simple equation, our architecture can generalize  
to radically different systems, such as the nonlinear Sel\'kov glycolysis oscillator and
the Belousov-Zhabotinsky model of chemical oscillations. 

Our approach is geared towards realistic, high-dimensional, noisy, and complex data. For instance, we apply our technique to the repressilator, a gene regulatory network which can generate oscillatory gene expression, and infer the probability for oscillatory behavior under conditions of varying gene transcription and degradation rates. Finally, we apply our method to the analysis of complex, large-scale biological data. By extracting features from single-cell gene expression velocity data during the differentiation of pancreatic cells, our method can distinguish between proliferation and differentiation dynamics with high fidelity. Our results indicate that our approach can be applied to a broad range of dynamical systems, providing valuable insights into their dynamic characteristics and bifurcation behavior
\footnote{Code available at: \href{https://github.com/nitzanlab/time-warp-attend/}{https://github.com/nitzanlab/time-warp-attend}}. 

\begin{figure}[htb!]
    \centering
    \includegraphics[width=0.85\textwidth]{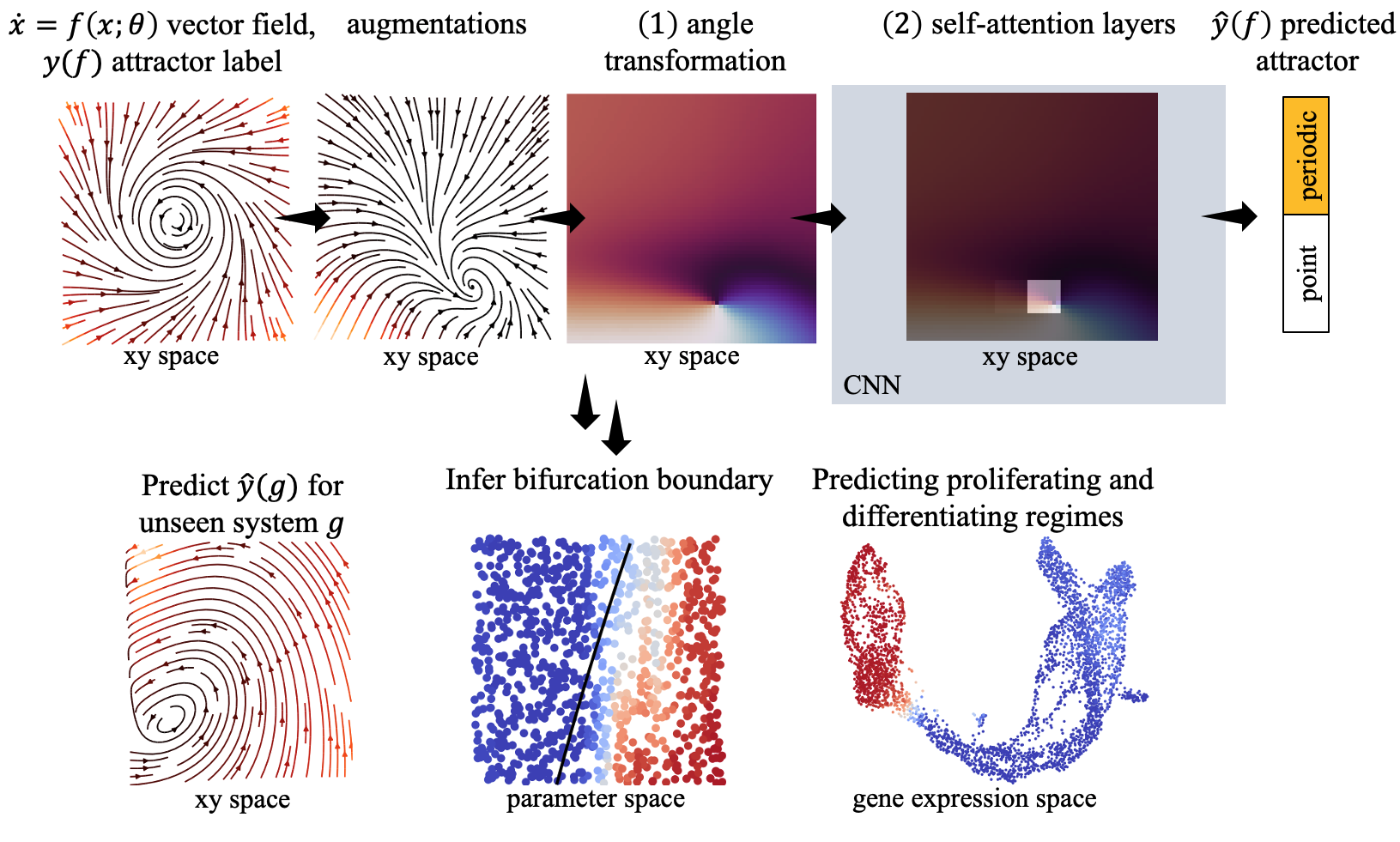}
    \caption{\emph{Overview.} Our framework classifies long-term dynamical behavior of point vs periodic attractor across systems and regimes by leveraging topologically-invariant augmentations of a single prototypical system. To encourage generalization, we (1) abstract system-specific vector magnitudes and use angular representations instead, and (2) deploy attention layers which focus on learning essential dynamical cues. Using this framework, we are able to classify and recover the bifurcation boundaries of complex, unseen systems. We apply this method to the problem of recovering cell-cycle scores which distinguish between proliferating and differentiating cell populations. All vector fields are plotted as streamlines.}
    \label{fig:overview}
\end{figure}

The main contributions of this paper are:

\begin{enumerate}
    \item A novel approach to topologically invariant feature learning in dynamical systems. The approach uses warped vector field data as augmentations to generalize across systems exhibiting similar dynamics in terms of their topological representation in phase space, allowing for a more selective and invariant understanding of bifurcations in diverse real-world systems.
    \item Invariant classification of complex and nonlinear synthetic data, including models of diverse chemical and biological systems, achieved by leveraging knowledge of a single prototypical system.
    \item The identification of bifurcation boundaries using classifier robustness.
    \item The recovery of distinct proliferation and differentiation dynamics along the pancreatic differentiation trajectory from single-cell gene expression data, providing a practical application of our method to the analysis of biological systems. 
\end{enumerate}

\section{Related Work}\label{sec:related}
\textbf{Classical methods for detecting limit cycles.} Solving non-linear differential equations is in general difficult, and it has been recognized since the time of Poincar\'e that reasoning about dynamical systems' characteristics can be made easier by qualitatively delineating solutions according to their long-term behaviors. For autonomous (i.e., time-invariant) systems in two dimensions, if the dynamics do not diverge, these behaviors are either fixed points or limit cycles, and transitions between these dynamical regimes as a result of changing parameters are called bifurcations. More explicitly, a bifurcation in these cases corresponds to a change in the topological structure of a phase portrait as a parameter of the dynamical system is varied. 

Full characterization of the location and number of limit cycles in two-dimensional, polynomial systems is a challenging open problem in dynamical systems theory \citep{hilbert2019hilbert, smale2000hilbert}. Early theoretical results guaranteeing the existence or absence of a limit cycle in two-dimensional autonomous systems are the theorems of Poincar\'e-Bendixson and Bendixson-Dulac, respectively. Further guarantees are possible, for example, when the system can be reduced to the form of a so-called Li\'enard equation. None of these results, however, is constructive or easy to use in practice. Center manifold theory \citep{carr2012centremanifold}, for its part, analyzes dynamics on representative subspaces where systems reduce to normal forms having known bifurcation behaviors, though this method requires knowledge of the underlying governing equations. The same limitation applies to Conley-Morse theory, which represents dynamics as a graph whose nodes correspond to long-term behaviors \citep{conley1978isolated}. Importantly, none of these approaches is easily adapted to real data, where equations are in general unknown and system measurements can be sparse, incomplete and noisy.

\textbf{Data-driven methods for detecting limit cycles.} Data-driven methods fall into non-topological and topological classes. In the former class, several approaches use  compressed sensing of trajectory data to reduce noise, followed by identification of system parameters and associated bifurcation diagrams \citep{wang2011predicting, wang2016datadrivenreview}, though this approach requires inferring governing equations for each new sample. Other work sought to reduce vector field measurements to a Conley-Morse graph \citep{chen2007morseinterpolate, chen2007morsedesign} for which a database over parameter regimes can be constructed via combinatorial dynamics exploration and graph equivalence comparisons \citep{arai2009database}. Yet other approaches used estimates of Lyapunov exponents to delineate between periodic and stationary systems, although, as we show below, this approach can be sensitive to noise \citep{stoker1950lyapunov,zhang2009lyapunov}. 

On the topological front, there is a growing body of work using persistent homology to detect periodicity in time series data \citep{ravishanker2021persistent}. For example, \citet{perea2015sliding} extracted persistence barcodes from delay embeddings of time series data to measure periodicity, although this process is computationally expensive and there is no standard approach to classifying barcodes. 
Persistent homology approaches often assume that the underlying bifurcation parameter is known \citep{tymochko2020persistent}, which greatly simplifies the problem at the cost of realism. To our knowledge, there is no data-driven method which is efficient (not requiring expensive barcode calculations for each sample), general (applicable to novel systems without retraining) and robust (not sensitive to noise). 

\textbf{Generalization across dynamical systems.} Comparatively few approaches exist for the simultaneous analysis of diverse dynamical systems and their bifurcation behavior. Work by \citet{brown2003sloppy, transtrum2014sloppy} and \citet{quinn2021information} mapped families of dynamical systems to a manifold whose geometry encodes different qualitative behaviors, but these methods have no straightforward way of identifying bifurcations between these behaviors. More recent work from \citet{ricci2022phase2vec} uses a data-driven approach called \texttt{phase2vec} to learn manifolds of dynamical systems, leading to dynamical embeddings which encode qualitative properties of the underlying systems, though it is not tailored for the detection of distinct dynamical regimes and their associated bifurcations per se.  


\textbf{Diffeomorphic augmentations for generalization.} 
Diffeomorphic transformations which preserve topology in the context of dynamical systems have been used principally in computer vision. In order to train a classifier of a limited image set of biological entities such as diatoms, \citet{vallez2022diffeomorphic} used diffeomorphic augmentations to expand its train data, however, since the labels of these data were not necessarily preserved by these transformations, a manual post-filtering was required. In a second set of works concerning dynamical systems specifically, \citet{khadivar2020hopfrobot} and \citet{bevanda2022koopmanizing} fit a diffeomorphic transformation into closed or linearized forms, and work by \citet{bramburger2021deep} used an autoencoder approach to reduce a complex dynamical system to a topologically conjugate (i.e., dynamically equivalent) form of discrete-time mapping. 
Here, we use
diffeomorphisms 
to curate a set of consistently labeled dynamical augmentations. 

\section{Methodology}\label{sec:methods}
\textbf{Topological equivalence in dynamical systems.}
Let $\dot{x} = f(x; \theta)$ for $x \in \mathbb{R}^2$ be a parameterized dynamical system with parameter vector $\theta \in \Theta$. A solution to this equation, $F(x_0, t)$, with an initial condition, $x(t=0)=x_0$, is called a trajectory through $x_0$. A standard approach to understanding dynamical systems \citep{kuznetsov1998appliedbifur} is to characterize them by the topology of all of their trajectories, construed as functions of $x$ and $t$ or just $x$. For example, do trajectories extend indefinitely, stop at a point (a fixed point) or loop back on themselves (a limit cycle)? More precisely, we say that $f$ has a fixed point at $x^*$ and $\theta = \theta^*$ if $f(x^*; \theta^*) = 0$. We say that $f$ has a limit cycle at $\theta^*$ if there exists a trajectory of $f$ which is closed and isolated (neighboring trajectories are not closed). Importantly, these properties are preserved when the state space is subjected to a topology-preserving map, making them topological invariants of the dynamical systems. However, these properties may change as $\theta$ changes; i.e. when the system undergoes a bifurcation. For example, in a system that undergoes a supercritical Hopf bifurcation, there exists a parameter $\theta$ such that below a certain threshold of $\theta$ the system exhibits a stable fixed point, and above it, the fixed point loses its stability and a stable limit cycle emerges. See Appendix \ref{supp:terms} for definitions.

\paragraph{Dynamical equivalence and topological augmentations.} In what follows, we will consider autonomous systems with either a single stable fixed point (i.e., equilibrium) or a stable limit cycle with an unstable fixed point in its interior. We term these two regimes as 
\emph{dynamical classes}, denoted $y(f)$, and we denote the fixed point and limit cycle classes considered here the ``pre-Hopf'' and ``post-Hopf'' classes, respectively. As discussed above, these classes are invariant to topology-preserving deformations of the state space. Here, we will use diffeomorphisms, $h\in H$ which induce an equivalence relation on dynamical systems in a standard and well-studied way: we have $y(f) = y(f\circ h)$ for all $h\in H$ so that a given $H$ defines an equivalence relation which partitions the space of dynamical systems into its dynamical classes. We refer to two parameterized systems, $f, g$ such that $g = f  \circ h$ as being \emph{dynamically equivalent} \footnote{We choose the neutral term ``dynamical equivalence" to distinguish our notion from the similar notions of topological conjugacy and topological equivalence \citep{kuznetsov1998appliedbifur}. See Appendix  \ref{supp:termsequivalence} for details.}.

In this light, classification becomes a problem of learning representations of dynamical systems which are invariant to all $h\in H$ but selective for the underlying dynamical class. To that end, we learn features on numerous, transformed versions of a prototypical system, a simple oscillator $f$ defined for a point at radius $r$ and angle $\phi$ with parameters $\theta=(a, \omega)$:
\begin{align*}
\dot{r} & = r(a-r^2)\\
\dot{\phi} & = \omega,
\end{align*}
which has a sparse functional form and undergoes a supercritical Hopf bifurcation for any $\omega\neq0$ at  $\theta^*=(0, \omega)$.  
We validate these learned features by measuring how well they encode the dynamical class and bifurcation boundary of testing data, which, though superficially very different from the training data, having different functional forms and parameter regimes, are nevertheless in the same equivalence classes induced by $H$.

Our training data is constructed by first acquiring planar vector fields, $\{f(x, \theta) \in \mathbb{R}^2\}$ evaluated at all points, $x$, on a fixed, $64\times 64$ lattice. Parameters, $\theta$, are sampled uniformly on $a\in[-0.5,0.5],  \omega\in[-1,1]$, and each system is assigned a label according to whether it is in the pre- or post-Hopf regime. Then, the dataset is augmented with diffeomorphisms generated using monotonic rational-quadratic splines. In particular, we use the spline construction from \citet{durkan2019splineflow}, where instead of training the neural spline flows proposed in that work, we use its random initialization which generates a diverse set of diffeomorphic augmentations. For details on data augmentation, \sref{supp:augment}. 

\textbf{Convolutional attention training.}
We construe our full, augmented data set as a collection of labeled arrays in $\mathbb{R}^{2\times 64 \times 64}$ which we use to train a convolutional network. Our architecture is adapted from the discriminator of self-attention generative adversarial network (SAGAN) of \citet{zhang2019sagan}, coupled with a multi-layered perception (MLP). For standardization of the data, we reduce each vector field to only its angular components, ignoring magnitudes which carry little information about cyclic behavior. 
The feature extraction module consists of four convolutional blocks with self-attention mechanism into the last two convolutional layers. Our reasoning for incorporating self-attention is that only a few dynamical keypoints in the vector field (i.e., the fixed point or the limit cycle) carry information about class label, and the rest of the array can be (nearly) ignored. Together, the data warping and attentional components of our pipeline comprise our eponymous method. Features from this network are passed to a multilayer perceptron classifier which is trained with a supervised, cross-entropy loss. Because of our augmentation scheme, this classifier must contend with data which, though radically different in appearance, belong to the same dynamical class, encouraging the classifier to learn the true dynamical invariances underlying the Hopf bifurcation. Training and architecture details are described in Appendix \ref{supp:arch}.

\section{Results} \label{sec:results}
\subsection{Transferring knowledge of dynamical classes from simple prototypes to complex test data }\label{sec:classify}
 
We evaluate our model and its baseline competitors on a diverse, synthetic dataset of noisy (zero-mean gaussian), real-world systems drawn from across the sciences.
Test datasets include the simple harmonic oscillator (``SO''), its warped variant (``Augmented SO''), a system undergoing a supercritical Hopf bifurcation as described in \citep{strogatz1994dynamics}(``Supercritical Hopf''),  examples based on Li\'enard equations (``Li\'enard Polynomial'' and ``Li\'enard Sigmoid''), and systems used to model oscillatory phenomena in nature and engineering applications, namely the van der Pol system, used for modeling electrical circuits in early radios (``Van der Pol''), the Belousov-Zhabotinsky model, a model of chemical oscillations (``BZ reaction''), and the Sel\'kov oscillator, used in modeling glycolitic cycles (``Sel\'kov''), \sref{supp:data} and \citet{strogatz1994dynamics}. 

Our baselines comprise:
(1) ``Critical Points'', a heuristic based on critical points identification as derived from \citet{helman1989criticalpts,helman1991criticalpts} algorithm, 
(2) ``Lyapunov'' which distinguishes pre- and post-Hopf classes according to the value of the Lyapunov exponent of a time series integrated from the vector field,
(3) ``Phase2vec'', vector field embeddings from \cite{ricci2022phase2vec}, a state-of-the-art deep-learning method for dynamical systems embedding, and
(4) ``Autoencoder'', latent features of a convolutional vector field autoencoder, 
\sref{supp:baselines}. As with our method, for all baselines except Critical Points, classifications are produced by a linear classifier trained on features from augmented simple harmonic oscillator data (Augmented SO) in order to discriminate between point and cyclic attractors. The accuracy average and standard deviation for our model and the trained baselines are computed over 50 training runs. 

We find that our framework generalizes across a wide range of dynamical systems and classifies distinct dynamical regimes in noisy (zero-mean gaussian) conditions better on average than baseline methods (see \tref{tab:accuracynoise}, \tref{tab:hyperparams}, and examples in \fref{fig:noise}). Critical Points achieves nearly perfect classification on noiseless data but is easily disrupted by noise (e.g., the perfect classification of Supercritical Hopf drops to 56\% with Gaussian noise $\sigma=0.1$). Likewise, Lyapunov suffers a substantial decrease in accuracy for classical oscillators (e.g., 89\% accuracy of Supercritical Hopf reduces to 70\% accuracy with Gaussian noise $\sigma=0.1$). Both with and without noise, Phase2vec extends mainly to SO and Supercritical Hopf. Autoencoder obtains competitive results but exhibits little generalization to Li\'enard Polynomial system, and essentially no generalization to Sel\'kov system. We also test resilience to noise in the Augmented SO data (see \tref{tab:noise}). Despite the substantial distortion of the vector fields caused by noise, the accuracies of our model and that of autoencoder representation only gradually decrease, unlike the sharp decrease in accuracy for the other baselines. 

\begin{table}[htb!]
  \centering
  \caption{Test accuracy with added Gaussian noise of $\sigma=0.1$.}
  \label{tab:accuracynoise}
  \small
  \resizebox{\textwidth}{!}{
    \begin{tabular}{lcccccccccc}
      \toprule
      & SO & Aug. SO & Supercritical Hopf & Li\'enard Poly & Li\'enard Sigmoid & Van der Pol & BZ Reaction & Sel\'kov & Average \\
      \midrule
      Our Model & \textbf{0.98$\pm$0.01} & \textbf{0.93$\pm$0.01} & \textbf{0.99$\pm$0.01} & \textbf{0.86$\pm$0.13} & \textbf{0.92$\pm$0.07} & 0.83$\pm$0.15 & 0.82$\pm$0.11 & \textbf{0.65$\pm$0.03} & \textbf{0.87} \\
      Critical Points & 0.54 & 0.56 & 0.56 & 0.70 & 0.49 & 0.56 & \textbf{0.84} & 0.51 & 0.60 \\
      Lyapunov & 0.64 & 0.60 & 0.71 & 0.85 & 0.67 & \textbf{0.87} & 0.83 & 0.57 & 0.72 \\
      Phase2vec & 0.73$\pm$0.08 & 0.71$\pm$0.04 & 0.88$\pm$0.03 & 0.49$\pm$0.06 & 0.48$\pm$0.0 & 0.49$\pm$0.04 & 0.5$\pm$0.0 & 0.49$\pm$0.0 & 0.6 \\
      Autoencoder & 0.95$\pm$0.03 & 0.87$\pm$0.01 & 0.97$\pm$0.01 & 0.63$\pm$0.16 & 0.88$\pm$0.09 & 0.81$\pm$0.16 & 0.82$\pm$0.13 & 0.49$\pm$0.02 & 0.8 \\
      \bottomrule
    \end{tabular}
  }
\end{table}

\begin{table}[htb!]
  \centering
  \caption{Evaluation of Augmented SO with added Gaussian noise of scale $\sigma$.}
  \label{tab:noise}
  \small
  \resizebox{\textwidth}{!}{
    \begin{tabular}{l*{11}{c}}
      \toprule
      $\sigma$ & 0 & 0.10 & 0.20 & 0.30 & 0.40 & 0.50 & 0.60 & 0.70 & 0.80 & 0.90 & 1.00 \\
      \midrule
      Our Model & \textbf{0.93$\pm$0.01} & \textbf{0.93$\pm$0.01} & \textbf{0.93$\pm$0.01} & \textbf{0.91$\pm$0.02} & \textbf{0.86$\pm$0.04} & \textbf{0.78$\pm$0.06} & \textbf{0.7$\pm$0.06 }& 0.64$\pm$0.04 & 0.6$\pm$0.04 & 0.57$\pm$0.03 & 0.54$\pm$0.02 \\
      Critical Points & 0.93 & 0.59 & 0.53 & 0.507 & 0.51 & 0.51 & 0.51 & 0.51 & 0.51 & 0.50 & 0.50 \\
      Lyapunov & 0.71 & 0.60 & 0.52 & 0.50 & 0.50 & 0.50 & 0.50 & 0.50 & 0.50 & 0.50 & 0.50 \\
      Phase2vec & 0.87$\pm$0.03 & 0.7$\pm$0.05 & 0.53$\pm$0.01 & 0.51$\pm$0.01 & 0.5$\pm$0.0 & 0.5$\pm$0.0 & 0.5$\pm$0.0 & 0.5$\pm$0.0 & 0.5$\pm$0.0 & 0.5$\pm$0.0 & 0.5$\pm$0.0 \\
      Autoencoder & 0.86$\pm$0.01 & 0.87$\pm$0.01 & 0.84$\pm$0.02 & 0.8$\pm$0.04 & 0.76$\pm$0.05 & 0.72$\pm$0.05 & 0.69$\pm$0.06 & \textbf{0.68$\pm$0.06} & \textbf{0.65$\pm$0.06} & \textbf{0.64$\pm$0.05} & \textbf{0.62$\pm$0.05} \\
      \bottomrule
    \end{tabular}
  }
\end{table}

In Appendix \ref{supp:baselines}, we report the results of another baseline,
``Parameters'' which trains a linear classifier based on a flattened vector of coefficients of a degree three polynomial fit to each vector field with least squares, a common method for bifurcation analysis, although this resulted in a poor average accuracy of 53\%. 

Furthermore, we find that all modules in our pipeline (topological augmentation, angular representation of data, self-attention layers) contribute, on average, to accuracy in complex systems, \sref{supp:ablation}. Concretely, the average classification accuracy of the framework, trained on data without augmentations, using vector representation, lacking attention layers, or without any of these, is 78\%, 82\%, 77\%, 68\%, respectively, and notably, except for the ablation of attention layers, other ablated models do not generalize to the complex BZ reaction and Sel\'kov systems. We also found that topological augmentations are particularly necessary when generalizing to systems with a fixed point far from the center of the grid. For example, ``centered" systems 
such as Li\'enard Sigmoid and Li\'enard Polynomial, are well-characterized without augmentations, reaching 0.96 and 0.94 accuracy respectively, whereas systems that are not centered, e.g. BZ reaction whose fixed point is close to the edge of the grid, require augmentations for generalization. 

We also evaluate the robustness of our method as a function of (1) the training dataset size, where we find good generalization even on relatively small data sets (0.84 accuracy after training on $\sim2k$ vector fields), (2) vector field resolution, where performance is stable when doubling the vector resolution (0.86 accuracy), and (3) architectural and optimization hyperparameters (learning rate, training epochs, etc.), obtaining almost consistently an overall accuracy of between 0.85 and 0.88. See Appendix \ref{supp:ablation} and \tref{tab:hyperparams} for details.

\subsection{Recovering bifurcation boundaries}\label{sec:boundary}

Classifier confidence is expected to decrease for systems near the bifurcation boundary since limit cycles in this regime shrink below the vector field resolution, making them indistinguishable from fixed points. This effect can be used to pinpoint the bifurcation boundary in parameter space. For example, when we plot each system's average prediction as a function of its bifurcation parameter (for Augmented SO, the parameter values are chosen as the original $a,\omega$), we find an inverse relation between the parametric distance from the bifurcation boundary and classifier confidence (see \fref{fig:bifurdistpred}, \fref{fig:confidence} and \fref{fig:bifuralt} for bifurcation diagrams of alternative baselines). An exception is the upper boundary of the Sel\'kov system where instances of high $b$ value are misclassified as exhibiting cyclic dynamics, and hence, the upper bifurcation boundary in the $a-b$ parameter plane is undetected (\fref{fig:bifurdistpred}, right plot). We explain the systematic misclassification as a result of damping which makes trajectories move in slow, tight spirals towards the fixed point attractor in a manner numerically indistinguishable from limit cycles (\fref{fig:selkovhard}). These overall suggest that our method can be geared towards recovering the bifurcation boundaries of unseen systems.

\begin{figure}[htb!]
    \centering
    \includegraphics[width=\textwidth]{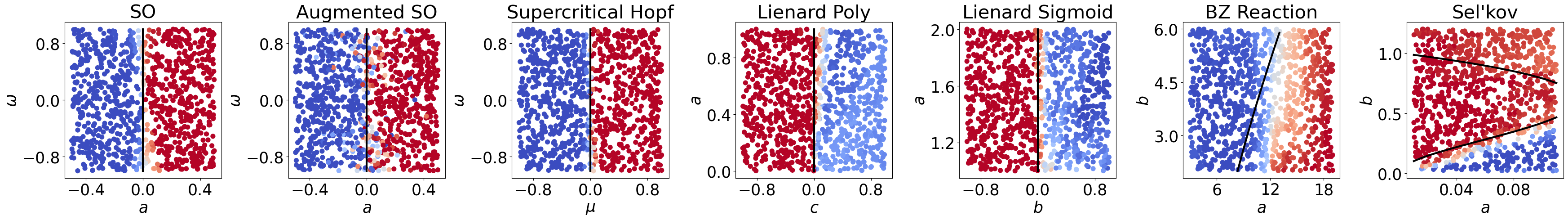}  
    \caption{\emph{Inference of bifurcation boundaries.} Each system is plotted at coordinates of its simulated parameter values and colored according to its average cycle prediction across 50 training re-runs. The true parametric bifurcation boundary is plotted in black. Colors are scaled between a perfect cycle classification (red) and perfect point classification (blue) with white at evenly split classification. 
    }
    \label{fig:bifurdistpred}
\end{figure}

\subsection{Predicting oscillatory regulation of the repressilator system}\label{sec:repressilator}
In order to demonstrate the viability of our method for higher-dimensional systems under realistic noise conditions, we apply our topological approach to a computational model in a well-studied  biological setting, the so-called repressilator model \citep{elowitz2000repressilator}. The repressilator system is a synthetic, genetic regulatory network that was introduced by Elowitz and Liebler into bacteria to induce oscillatory regulation of gene expression (\fref{fig:repressilator}). The system is modeled by the mRNA and protein counts of three genes (TetR, LacI and $\lambda$ phage cI) that inhibit each other in a cyclic manner (\sref{supp:repressilator} for governing equations). Theoretically and experimentally, the existence and characteristics of oscillations have been shown to depend on numerous factors like the rates of transcription, translation, and degradation, as well as intrinsic and extrinsic noise \citep{verdugo2018repressilatorhopf, potvin2016repressilatoroscillations}. Here we test our method in classifying a regime of the repressilator which has been shown to undergo a  Hopf bifurcation, though, unlike the previous complex simulated systems, it spans six dimensions and is constructed from noisy cellular time-series. 

To generate 2D vector field data, we consider trajectories of cellular gene expression measured by protein levels of TetR and LacI (\fref{fig:repressilator}, middle). Different instances of simulations share the initial condition but vary by the rate of transcription, $\alpha$, and by the ratio of protein and mRNA degradation rates, $\beta$. We sample 100 cell states along the temporal trajectory and add Gaussian noise of $\sigma=0.5$ to the gene expression states (to protein and mRNA levels). 
For each set of measurements, we interpolate an input 64x64 vector field, based on their velocities across TetR-LacI protein phase space, see examples at \fref{fig:repressilatorsamples} and Appendix \ref{supp:repressilator} for further details. We predict for each vector field its dynamical class, achieving 87\% classification accuracy and proximate bifurcation boundary to the analytically-derived boundary (\fref{fig:repressilator}, right \citep{verdugo2018repressilatorhopf}). In \fref{fig:repressilatorhard} we show prototypes of trajectories across the $\alpha-\beta$ parameter space, where we see that where we fail to classify a point attractor, trajectories of cells loop closely in the TetR-LacI protein plane and generate data samples that highly resemble trajectories with a periodic attractor rather than with a point attractor. 

\begin{figure}[htb!]
    \centering
    \includegraphics[width=0.6\textwidth]{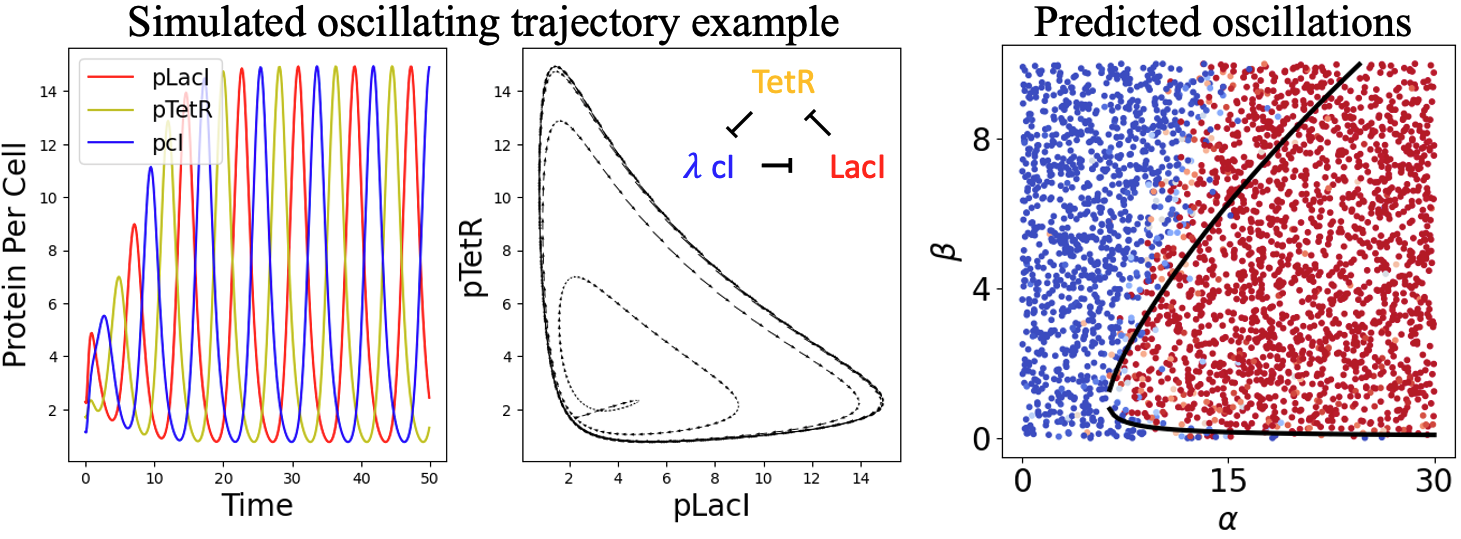}  
    \caption{\emph{Oscillations in a repressilator system}. Example of oscillations in the repressilator system viewed as a time-series \emph{(left)} or projected onto TetR-LacI protein plain \emph{(middle)}. \emph{(right)} Predicted oscillatory behavior by our model output upon varying transcription rate without repression ($\alpha$) and the ratio of protein and mRNA degradation rates ($\beta$). The theoretical boundary is marked in black. See details in Appendix \ref{supp:repressilator}. 
    }
    \label{fig:repressilator}
\end{figure}

\subsection{Distinguishing dynamical regimes in single-cell data of the pancreas}\label{sec:pancreas}
Last, we leverage our method to characterize distinct dynamical regimes corresponding to different biological processes in real data. Specifically, we focus on distinguishing proliferation vs. differentiation dynamics in single-cell RNA-sequencing data.
A population of proliferating, precursor cells expresses genes associated with DNA duplication in a cyclic fashion \citep{tirosh2016dissecting,schwabe2020cycle,karin2023scprisma}. Cell differentiation, on the other hand, and similarly to the Waddington landscape abstraction \citep{waddington2014strategy}, is often characterized by cellular flow in gene expression space towards a localized, differentiated state \citep{saez2022cellfate}.
We show that our model can detect the qualitative change in the flow of cells in gene expression space as they mature and differentiate out of their precursor cell state, moving from a dynamical regime characterized by a periodic behavior to that characterized by fixed points. 

We focus on the process of pancreas endocrinogenesis where Ductal and Ngn3+ precursor cells rapidly proliferate to give rise to functional cell types Alpha, Beta, Delta and Epsilon \citep{bastidas2019pancreasdata}; In that study, a single-cell RNA-sequencing dataset was collected where the expression of over 2,000 genes are measured per cell, for a total of 3,600 cells (\fref{fig:pancreas}, left).
While the RNA expression profile, providing a gene expression vector per cell, provides for each cell its location within gene expression space, RNA velocity \citep{bergen2021rnavelo,gorin2022rnavelo} can be used to compute the effective velocity of each cell (based on differences in spliced and unspliced RNA counts). We use scVelo \citep{bergen2020scvelo} to infer cellular velocities in the high-dimensional gene expression space and to project them to a 2D UMAP representation.
Simultaneously, we also compute for each cell a `cell-cycle score', which captures the extent of expression of cell-cycle genes (in phase S and in grouped G2 and M phases, \fref{fig:pancreas}, middle; \sref{supp:pancreas}, \citet{bergen2020scvelo, tirosh2016dissecting}) . 
For each set of cells, we interpolate an input 64x64 vector field, based on their RNA velocities,  over the 2D UMAP representation. 

To generate multiple frames from the trajectory, we randomly split cells of each cell type into groups of at least 50 cells resulting in 69 (sparse, noisy) vector fields with corresponding cell-cycle score (\fref{fig:pancreas}, middle, Appendix \ref{supp:pancreas}). 
We then used our model (still trained solely on simulated augmented simple oscillator data) to characterize the dynamic regimes of the cellular behavior. 
Our model can successfully identify proliferative, cyclic behavior within the single-cell data, as reflected by 94\% accuracy in predicting a cyclic behavior matching high cell cycle scores (\fref{fig:pancreas}, right), substantially exceeding the prediction accuracy reached by alternative baselines (Critical Points: 54\%,
 Lyapunov: 49\%,
 Phase2vec: 30\%,
 Autoencoder: 30\%). As opposed to the cell-cycle score, our model does not use any prior knowledge regarding cell-cycle related genes, and is only based on the structural features of the cellular vector fields, and therefore can also be employed when such prior knowledge is unavailable.

\begin{figure}[htb!]
    \centering
    \includegraphics[width=0.9\textwidth]{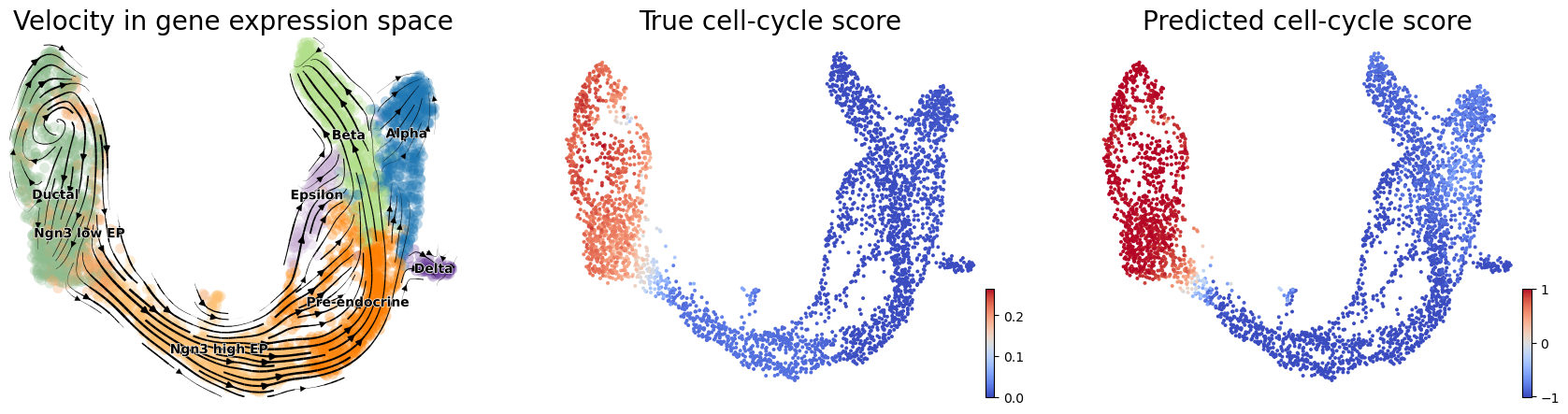}
    \caption{\emph{Proliferation-to-differentiation transition in pancreas development.} \emph{(left)} Gene expression velocity of cells undergoing pancreas endocrinogenesis in UMAP space. Colors distinguish between different cell types. \emph{(middle)}  Cell-cycle score based on the fraction of cells with high S-phase gene expression 
    (with S-score $>$ 0.4). \emph{(right)} Predicted cell-cycle score computed as the average classification prediction across 50 training re-runs. See details in Appendix \ref{supp:pancreas}.
    }
    \label{fig:pancreas}
\end{figure}

\section{Limitations}\label{sec:limitations}
Our framework learns to be invariant to geometric (as opposed to topological) properties of systems through topological data augmentation and convolutional attention, achieving competitive results in bifurcation detection across real-world systems. Nevertheless, we note here several important limitations of our method in its current form. The method is currently confined to the important but circumscribed setting of two-dimensional dynamics. 
Scaling up to much higher dimensions in these contexts presents technical challenges like the exponential growth of input size with dimension, as well as theoretical challenges such as the lack of structural stability in higher dimensions and the introduction of chaotic regimes. Since inherently low-dimensional dynamics often underlie high-dimensional measurements, especially in biology \citep{xiong2023physiological,khona2022brain}, high dimensions can be mitigated by hardwired, or optimized dimensionality reduction methods as we deploy for the six-dimensional repressilator and for analyzing over 2,000 gene expression (dimensional) dynamics of the pancreas single-cell dataset. 
Additionally, we used a supervised, multi-class setup so that the classifier can predict that both a limit cycle and a fixed point co-exist. Empirically, we find that our method learns that the point and periodic attractors are exclusive, in fact, the learned logits consistently negate each other. For emphasis, we tested our trained model on samples representing a subcritical Hopf bifurcation having a regime with point and periodic attractors whose logits were of intermediate value but still negations, \sref{supp:cooccur}. In the future, we aim to extend to combinatorial annotations of invariant sets, as elaborated in the discussion.

\section{Discussion}\label{sec:discussion}
The problem we consider is the data-driven classification of a diverse set of both real and synthetic dynamical systems into topological equivalence classes. This broad scope fundamentally distinguishes our work from most existing methods in bifurcation analysis which deal exclusively with the prediction of oncoming catastrophes from time series data of single, isolated systems. The fact that our method can be simultaneously applied to numerous systems of diverse functional and parametric forms demonstrates that our model has learned an abstract notion of these equivalence classes: it can be applied out-of-the-box to novel scenarios without the need for retraining. 
The integration of physics-informed data augmentation within the pipeline is crucial for attaining this high-level performance for complex systems that are substantially different from the prototype system underlying the training data. Our model captures visual uncertainty in systems near bifurcation boundaries, allowing for the extraction of an equation-free representation of these transitions. Furthermore, our model demonstrates robustness across samples disrupted by noise, withstands data produced from complex, high-dimensional time-series that is prominent for real-world applications such as the design of synthetic gene regulation, and successfully detects in real single-cell data the qualitative shift in cellular flow topology, corresponding to shifts of biological behaviors, from precursor to differentiated cell states, without relying on gene semantics.


Our work suggests numerous next steps including further translating our prior knowledge of topological properties, like the importance of critical points, into either explicit features or trainable, physics-informed network components \citep{karniadakis2021physics}.  Another goal is to broaden the scope to encompass a wider range of invariance types, 
combinatorial comparisons of invariant sets (e.g., via approximation of Conley-Morse graphs), and extensions to stronger notions of equivalence, such as topological conjugacy. To accommodate more versatile invariances and bifurcations, future work could extend our framework to the unsupervised case. For example, given that our data augmentations produce samples which are in the same dynamical equivalence class, a contrastive loss could be used to shape the feature space so that dynamical classes are represented implicitly during training. Or, while challenging, following \cite{skufca2008conjugate}, a purely unsupervised approach could be used to explicitly construct warpings between systems in order to characterize equivalence classes. Our work
demonstrates the potential for scaling up to applications such as identifying dynamical regimes and underlying driving factors across both biological and physical applications.


\subsubsection*{Acknowledgments}
We thank Yedid Hoshen for early discussions. This work was supported by the Israeli Council for Higher Education Ph.D. fellowship (N.M.), the Center for Interdisciplinary Data Science Research at the Hebrew University of Jerusalem (N.M.), the Zuckerman Postdoctoral Program (M.R.), Azrieli Foundation Early Career Faculty Fellowship, Alon Fellowship, and the European Union (ERC, DecodeSC, 101040660) (M.N.).
Views and opinions expressed are however those of the author(s) only and do not necessarily reflect those of the European Union or the European Research Council. Neither the European Union nor the granting authority can be held responsible for them.

\newpage


\bibliography{iclr2024_conference}
\bibliographystyle{iclr2024_conference}

\newpage
\renewcommand\thefigure{A\arabic{figure}} 
\renewcommand\thetable{A\arabic{table}}

\section{Appendix}\label{supp:appendix}
\subsection{Definitions}\label{supp:terms}

\subsubsection{Dynamical systems and Invariant Sets}\label{supp:termsdynamical}

This paper is concerned with dynamical systems which can be identified (potentially implicitly) with two-dimensional systems of ordinary differential equations. For a more general treatment, see \cite{strogatz1994dynamics,kuznetsov1998appliedbifur}. 

Let $z = (x,y) \in \mathbb{R}^2$ and
\begin{align}
    \label{ivp}
    &\frac{dz}{dt} = \bigg(\frac{dx}{dt}, \frac{dy}{dt}\bigg) = f(z,t)\\
    &z(0) = z_0\nonumber
\end{align}
be an initial value problem for $f$ assumed to be smooth. 

\begin{definition}[Continuous-time real dynamical system]
The map $ \phi : \mathbb{R}^2 \times [0,T] \to \mathbb{R}^2$ given by
\begin{align}
    \phi(z_0,T) = z(0) + \int_0^T f(z(t),t) \ dt
\end{align}
is called a (continuous-time real) \textbf{dynamical system} or \textbf{flow}. In this context, $\mathbb{R}^2$ is called the \textbf{phase space}, $z(t)$ is a particular \textbf{state} of $\phi$ at $t$ and $f(z(t),t)$ is called the \textbf{vector field} associated to $\phi$. For a fixed $z_0$, the map $\phi_{z_0}: \mathbb{R}^2 \to \mathbb{R}^2$ given by a particular solution to Eq. \ref{ivp} is called a \textbf{trajectory} of $\phi$ through $z(0) = z_0$.
\end{definition}
Hereafter, we assume $f(z(t),t)=f(z(t))$, in which case we call $\phi$ \textbf{autonomous}. Otherwise, it would be called \textbf{non-autonomous}.

The study of dynamical systems is characterized by its focus on long-term behavior of such systems. In our case, two long-term behaviors are of interest. First, if there exists $z^*\in \mathbb{R}^2$ such that $f(z^*)=\textbf{0}$, then we say $z^*$ is a \textbf{fixed point} of $\phi$. In particular, $\phi(z^*, T)=z^*$ for all $T$.

The second long-term behavior of interest in our context is the \textbf{limit cycle}.
\begin{definition}[Limit cycle]
Suppose there exist $z_0$ and $T$ such that $\phi_{z_0}(T) = z_0$. Then, the set
\begin{align}
    \gamma = \{z\in \mathbb{R}^2 \ : \ \exists \ t\text{ s.t. }\phi_{z_0}(t) = z\}
\end{align}
is a closed, simple curve (simple since Eq. \ref{ivp} always has a unique solution). If for each $z\in\gamma$ there exists an $\epsilon > 0$ such that the ball $B_{\epsilon}(z)$ of radius $\epsilon$ around $z$ is intersected by no other closed, simple curve, then we say $\gamma$ is isolated. A trajectory of $\phi$ forming an isolated, simple, closed curve, $\gamma$, is called a \textbf{limit cycle}.
\end{definition}

Zero and one-dimensional structures like fixed points and limit cycles are examples of ``invariant sets" and serve to distinguish different types of long-term behaviors in dynamical systems. Further distinctions can be made by considering the \emph{stability} of these structures: informally, how the system behaves when it is slightly perturbed away from a fixed point or limit cycle. In the following definition, the set $S$ will denote either a singleton fixed point or the curve in $\mathbb{R}^2$ coinciding with a limit cycle. For any $z$, let
\begin{align}
    d(z,S) = \inf \{\|z-z'\|_2 \ : \ \forall z'\in S\}
\end{align}

\begin{definition}[Asymptotic stability]
    Let $S$ be an invariant set as defined above with respect to a given dynamical system, $\phi$. If there exists a $\delta>0$ such that when $d(z,S) < \delta$ and $d(\phi(z,t),S) \to 0$, then we say $S$ is (asymptotically) \textbf{stable}. Otherwise, we say it is (asymptotically) \textbf{unstable}. 
\end{definition}

Informally, $S$ is stable when small perturbations away from $S$ eventually converge onto $S$. 

\subsubsection{Equivalence of dynamical systems}\label{supp:termsequivalence}

Fixed points and limit cycles are invariant sets not just because if a solution starts in or enters the set it stays within it, but also because those sets are preserved under the action of certain transformations to the whole system. Informally, these transformations continuously and smoothly warp the phase space in a way that preserves fixed points, limit cycles and their stability properties. Systems which differ only by such a transformation are considered equivalent. More specifically, recall that a \textbf{homeomorphism} is a continuous bijection between two topological spaces with a continuous inverse. A \textbf{diffeomorphism}, similarly, is a differentiable bijection with a differentiable inverse. We define several notions of equivalence:

\begin{definition}
    Let $\phi(x,t)$ and $\psi(y,t)$ be dynamical systems (flows) on $\mathbb{R}^2$. Then,

    \begin{enumerate}
        \item \textbf{Topological conjugacy}. If there exists a homemorphism, $h : \mathbb{R}^2 \to \mathbb{R}^2$, such that 
        \begin{align}
            \phi(h(y),t) = h(\psi(y,t))
        \end{align}
        for all $y,t$, then $\phi$ and $\psi$ are called ``topologically conjugate."
    \item \textbf{Topological equivalence}. Let $\mathcal{O}$ denote a trajectory of $\phi$. Suppose there exists a homeomorphism $h: \mathbb{R}^2 \to \mathbb{R}^2$ such that
    \begin{align}
        h(\mathcal{O}(y, \psi)) = \{h \circ \psi(y, t): t \in \mathbb{R}\} = \{\phi(h(y), t): t \in \mathbb{R}\} = \mathcal{O}(h(y), \phi)
    \end{align}
    and for each $y$, there is a $\delta>0$ such that if $0 < \vert s \vert < t < \delta$ and there exists an $s$ such that $\phi(h(y), s) = h \circ \psi(y, t)$, then $s>0$. In other words, $h$ maps orbits to orbits and preserves the order of time. In this case, we say $\phi$ and $\psi$ are ``topologically equivalent".
    \item \textbf{Dynamical equivalence}. Suppose the flows $\phi$ and $\psi$ have associated vector fields $f$ and $g$. If there exists a homeomorphism $h: \mathbb{R}^2 \to \mathbb{R}^2$ such that
    \begin{align}
        \frac{dy}{dt} = g(y) = f(h^{-1}(y)),
    \end{align}
    then we say that the flows are ``dynamically equivalent."
    \end{enumerate}
\end{definition}

If two flows are equivalent according to one of these three notions, then there exists a one-to-one correspondence between their invariant sets which may or may not preserve their stability properties. Note that throughout this paper, we use diffeomorphisms, which is a stronger notion than homeomorphism. In that sense, there are \emph{at least} as many invariances implied in the diffeomorphic case as in the homeomorphic one. In particular, for the notion of ``dynamical equivalence", the one we use throughout this paper for its simplicity, we have the following claim: 

\begin{claim}
Let $X$ and $Y$ be two topological spaces and let $h : X \to Y$ be a homeomorphism, a function that is continuous, bijective, and has a continuous inverse. Let $\dot{x} = f(x)$ be a dynamical system on $X$ and define $\dot{y} = g(y)  = f(h^{-1}(y))$. Then, (1) if $f$ has a fixed point, then so does $g$ and (2) if $f$ has a periodic orbit, then so does $g$.
\end{claim}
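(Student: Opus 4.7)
The plan is to handle part (1) by direct substitution into the defining formula for $g$, and to address part (2) by transporting the periodic orbit of $f$ through $h$ and verifying that the resulting image carries the required invariant-set structure in $Y$.

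For (1), I would pick any fixed point $x^\ast \in X$ with $f(x^\ast) = 0$ and set $y^\ast := h(x^\ast)$. Unpacking the definition yields
\[
g(y^\ast) \;=\; f(h^{-1}(y^\ast)) \;=\; f(h^{-1}(h(x^\ast))) \;=\; f(x^\ast) \;=\; 0,
\]
so $y^\ast$ is a fixed point of $g$. Bijectivity of $h$ ensures this is a well-defined point of $Y$, finishing this half in one line.

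For (2), let $x(t)$ be a periodic trajectory of $f$ with minimal period $T > 0$, tracing out the simple closed curve $\gamma \subset X$. The natural candidate for a periodic orbit of $g$ is the image $h(\gamma) \subset Y$. Closedness follows from $h(x(T)) = h(x(0))$, while injectivity of $h$ together with continuity of $h^{-1}$ ensures $h(\gamma)$ remains a simple curve. The main obstacle I anticipate is that, taken at face value, the formula $g = f\circ h^{-1}$ does not make $y(t) := h(x(t))$ literally a trajectory of $g$: the chain rule gives $\dot{y}(t) = Dh(x(t))\,f(x(t))$, whereas $g(y(t)) = f(x(t))$, so the two sides agree only up to the factor $Dh(x(t))$.

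I would reconcile this at the level of invariant sets rather than parametrizations, which is consistent with the paper's topological framing: argue that $h(\gamma)$ is an invariant closed curve of the flow of $g$ (equivalently, that the Jacobian correction implicit in the equivalence is absorbed into a time reparametrization along the orbit), and then transfer isolation via the homeomorphism property of $h$ — if $\gamma$ admits an $\epsilon$-tube meeting no other simple closed trajectory of $f$, then $h$ and $h^{-1}$ jointly produce a corresponding open neighborhood of $h(\gamma)$ enjoying the same property, so $h(\gamma)$ is a limit cycle of $g$. The hard part will be pinning down the parametrization-versus-invariant-set convention so the chain-rule mismatch does not break the argument; once that is settled, the rest reduces to routine bookkeeping and invocation of the preservation of topological features under the homeomorphism.
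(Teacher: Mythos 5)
Your part (1) is exactly the paper's argument and is fine. For part (2), you have correctly put your finger on the real difficulty --- the chain-rule mismatch between $\dot y(t) = Dh(x(t))\,f(x(t))$ and $g(y(t)) = f(x(t))$ --- but your proposed resolution does not close it. The discrepancy is a linear map, not a positive scalar, so it cannot be ``absorbed into a time reparametrization along the orbit'': a reparametrization can rescale the speed along an invariant curve, but it cannot turn a vector field that is transverse to $h(\gamma)$ into one tangent to it. And with the literal definition $g(y) = f(h^{-1}(y))$, the curve $h(\gamma)$ genuinely need not be invariant. Concretely, take $f$ to be the normal-form oscillator with the unit circle as limit cycle ($\dot r = r(1-r^2)$, $\dot\theta = 1$) and $h$ the rotation by $\pi/2$. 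Then $h(\gamma)=\gamma$ is again the unit circle, but on it $g(y)=f(h^{-1}y)$ is the tangent vector at $h^{-1}y$, which is the \emph{radial} direction at $y$; in polar coordinates the new system is $\dot r = r$, $\dot\theta = -(1-r^2)$, which has no periodic orbit at all. So the invariant-set route you sketch would fail, and part (2) cannot be proved from the stated hypotheses.

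It is worth noting that the paper's own proof has the same gap: it writes $y(\tau) = y(0) + \int_0^\tau g(y(t))\,dt$ and then asserts ``$x(t) = h^{-1}(y(t))$ for all $t$,'' which silently assumes that $h$ carries trajectories of $f$ to trajectories of $g$ --- precisely what the chain rule obstructs. The statement becomes true, and both your argument and the paper's go through essentially verbatim, if $g$ is instead defined as the pushforward vector field $g(y) = Dh\bigl(h^{-1}(y)\bigr)\, f\bigl(h^{-1}(y)\bigr)$ (smooth conjugacy rather than the composition $g = f\circ h^{-1}$); then $y(t):=h(x(t))$ is literally a solution of $\dot y = g(y)$, and periodicity transfers with the same period. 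To salvage part (2) you should either adopt that definition or restrict the class of $h$ so that the Jacobian correction along $\gamma$ reduces to a positive scalar.
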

\begin{proof}
If $f$ has a fixed point at $x_0$ then $f(x_0) = 0$. In that case, $g$ has a fixed point at $y_0 = h(x_0)$ since
\begin{align*}
    g(y_0) &= f(h^{-1}(y_0))\\
    &= f(x_0)\\
    &=0
\end{align*}
Next, let $x(t)$ be a solution to $f$ which forms a periodic orbit, $\gamma$, passing through $x_0 = x(0)$. $\gamma$ is a simple, closed curve and therefore so is its image, $h(\gamma)$, since $h$ is a homeomorphism. However, we must still show that $h(\gamma)$ is a periodic orbit of $g$. Since $\gamma$ is a periodic orbit of $f$, there exists a $\tau$ such that 
\begin{align*}
    x(\tau) &= x(0) + \int_0^{\tau} f(x(t)) \ dt\\
    &= x(0)
\end{align*}
so that $ \int_0^{\tau} f(x(t)) \ dt = 0$. Note that $x(t) = h^{-1}(y(t))$ for all $t$. Thus, 
\begin{align*}
    y(\tau) &= y(0) + \int_0^{\tau} g(y(t)) \ dt\\
    &= y(0) + \int_0^{\tau} f(h^{-1}(y(t))) \ dt\\
    &= y(0) + \int_0^{\tau} f(x(t)) \ dt\\
    &= y(0),
\end{align*}
which shows that $h(\gamma)$ is a periodic orbit of $g$ with the same period, $\tau$. 
\end{proof} 

Thus, two ODEs $f(x)$ and $g(y)$ on the plane are ``equivalent'' ($f\sim g$ via $h^{-1}$) if there exists a homeomorphism $h \in H$ such that $g(y) = f(h^{-1}(y))$ (*). The relation $\sim$ is an equivalence relation partitioning the space of ODEs into dynamical equivalence classes. In particular, this relation is symmetric. For, if * holds, then $g(h(x)) = f(h^{-1}(h(x))) = f(x)$. I.e. $g\sim f$ via $h$.

Since all flows within a dynamical equivalence class share the same features that we care about in this study, if suffices often to work with a single example which has a simple functional form. In that case, we refer to that example as a \textbf{prototypical system}. 

\subsubsection{Bifurcations}\label{supp:termsbifurcations}

Next, suppose that the ODEs $\frac{dz}{dt} = f(z;\theta)$ associated to a family of flows is parameterized by $\theta \in \Theta$ which we assume to be one dimensional for simplicity. Then each homeomorphism $h$ partitions the family into dynamical equivalence classes corresponding to a partition of $\Theta$. We have the following definition:

\begin{definition}
    Let $F = \{f(z;\theta) \ : \ \theta \in \Theta \subset \mathbb{R}\}$ be a parameterized family of ODEs corresponding to a family of flows $\phi_{\theta}$. Suppose $h$ is a homeomorphism defining an equivalence relation $\sim$ on $F$. If there exists a $\theta^*$ such that, for all $\epsilon > 0$, there are $\theta$ such that $\|\theta - \theta^*\| < \epsilon$ with $f_{\theta} \not\sim f_{\theta^*}$, then we say there is a \textbf{bifurcation} at $\theta^*$.
\end{definition}

Informally, a bifurcation occurs when smooth changes in a system parameter result in qualitative changes in dynamics. In this study, we focus on classifying vector fields according to whether they belong to one of two equivalence classes in a family of flows exhibiting a supercritical Hopf bifurcation (definition in main text). 

\subsection{Methodology} \label{supp:methods}

\subsubsection{Augmentations}\label{supp:augment}
Topological data augmentation was carried out by warping input vector fields with bounded, monotonic rational-quadratic splines proposed in \citet{durkan2019splineflow}. The spline mapping $h: x^d \rightarrow y^d$ for $d$-dimensional input, where $d$ equals the number of dimensions (in our case, $d = 2$) is based on a set of hyperparameters, including a boundary parameter $B$ and the number of bins $K$. The mapping function $h$ transforms each coordinate within the boundary range $[-B, B]$ using a spline of $K+1$ knots with each bin in between following a random monotonic rational-quadratic function. Coordinates outside the range retain the identity function (i.e., unchanged coordinates), see \citet{durkan2019splineflow}, Figure 1. In our experiments, we set $B$ to 4 and use 5 bins ($K=5$). Given these hyperparameters, exact parameters defining the splines (width, height and derivatives at the knots) are retrieved from a normalizing flow. Since we use random initialization values, this is equivalent to sampling these parameters directly, without using the normalizing flow architecture. 
Given a random diffeomorphism, denoted by $h$, and the system dynamics represented by $f$, we invert $h$ to obtain coordinates $X = h^{-1}(Y)$ from the 64-by-64 grid coordinates $Y$. We then apply the dynamics $f$ to the transformed coordinates $X$ and consider these as the dynamics at grid coordinates $Y$, that is $g(Y)=f(X)$. For our augmentations, we adopt an implementation by Andrej Karpathy available at \url{https://github.com/karpathy/pytorch-normalizing-flows}.


After applying the augmentation, we verified that the fixed point remained in-frame. This is a proxy to determine if the attractor is still within our field of view. We refer to the dataset obtained by transforming the simple oscillator (``SO'') with this augmentation as the ``Augmented SO'' dataset.

\subsubsection{Architecture and Training}\label{supp:arch}
For vector field $\dot{X} = (\dot{x}, \dot{y})$ of dimensions $[2, 64, 64]$, we first compute its angular representation by replacing each vector with $\arctan{(\dot{y} / \dot{x})}$, which returns a signed representation of the angle in radians.

Our architecture is composed of a convolutional attention feature extraction module based on the SAGAN discriminator \citep{zhang2019sagan}, coupled with a multi-layered perception (\fref{fig:arch}).

The feature extraction module consists of four convolutional blocks, each utilizing a kernel size of $3\times 3$ and a stride of $2\times2$. The number of channels in each layer is progressively increased to 64, 128, 256, and 512. As in SAGAN, we applied spectral normalization and LeakyReLU non-linearities to all convolutional blocks and incorporated the self-attention mechanism into the last two convolutional layers.

The output of the convolutional layers was then fed into a single-layer MLP (multi-layer perceptron) to map the activations to a $10$-dimensional space, which we refer to as the ``feature activations''. The MLP consists of a hidden layer with 64 dimensions, ReLU activation, and a dropout rate of 0.9. Using linear mapping, we predicted the logits for both point and periodic attractors from the feature activations.

We formulated the problem as a multi-class classification task, as there can be simultaneous presence of point and cycle attractors. We used a binary cross-entropy loss. 

For training and evaluation, we prepared a dataset comprising 10,000 samples for training and 1,000 samples for testing. We employed the ADAM optimizer with a learning rate of $1 \times 10^{-4}$ and trained the model for 20 epochs, which we confirmed was enough time to fit the training data. For inference, we still perform dropout and average the output of 10 evaluations for each sample.

All experiments were carried out using pytorch v.1.12 using an NVIDIA RTX 2080 GPU, taking $\sim$40 seconds per training experiment.

\begin{figure}[htb!]
    \centering
    \includegraphics[width=\textwidth]{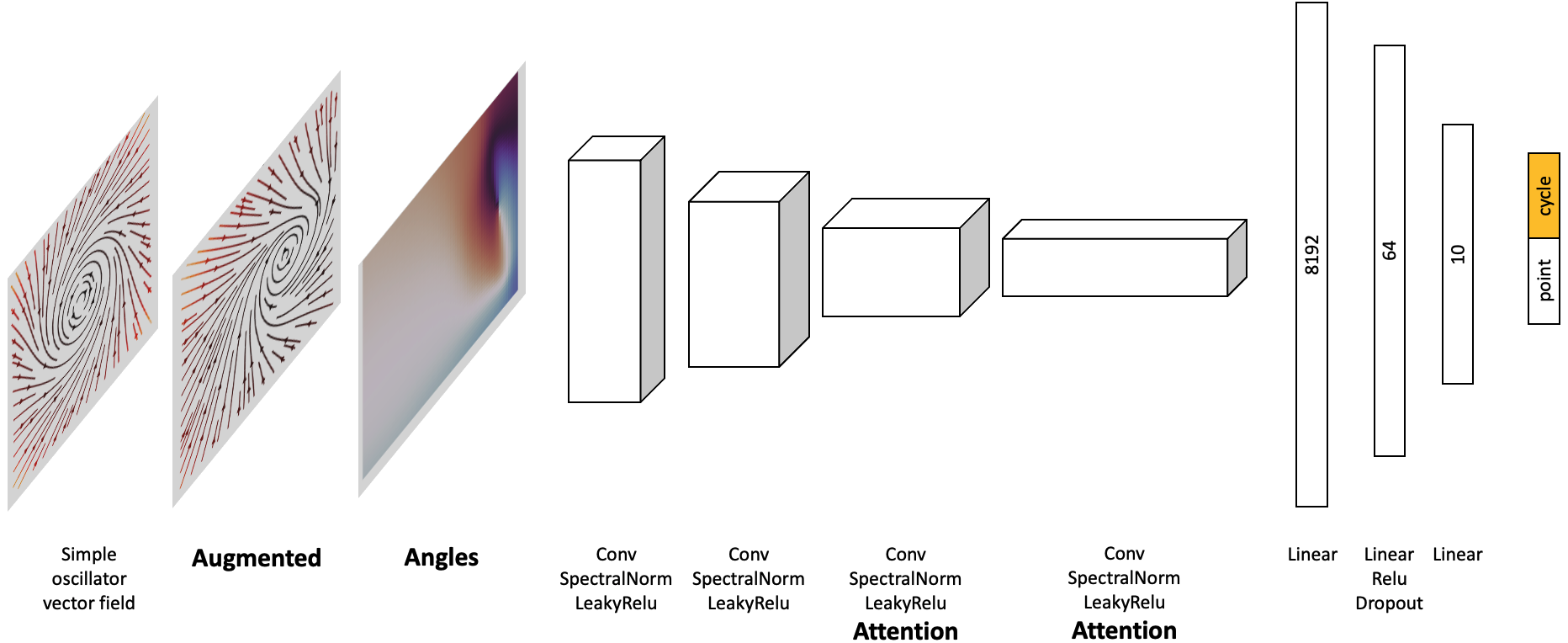}
    \caption{\emph{Framework.} Our point-periodic classifier begins by augmenting the vector field data and converting it to an angular representation. Data is then passed through convolutional-attention layers and an MLP classifier.
    }
    \label{fig:arch}
\end{figure}

\subsection{Synthetic datasets}\label{supp:data}

All input vector fields were computed by measuring the continuous system on an evenly-spaced grid of $64 \times 64$ points. In \tref{tab:eqs} the governing equations of each system are described in terms of either polar coordinates $(r,\theta)$ or cartesian coordinates $(x_1,x_2)$ in addition to 
the range of phase space examined, the parameter ranges we sample from, and bifurcation boundaries described in \citet{strogatz1994dynamics}. Examples of pre- and post-Hopf instances are plotted in \fref{fig:othersystems}.

\begin{table}[htb!]
  \centering
   \captionsetup{justification=centering}
  \caption{Descriptions of systems undergoing Hopf bifurcation.\\
  (*) Sel\'kov periodic attractor condition: $\small (1/2 (1-2a - \sqrt{1-8a}))^{1/2} < b < (1/2 (1-2a + \sqrt{1-8a}))^{1/2}$ }
  \label{tab:eqs}
  \small
  \resizebox{\textwidth}{!}{
    \begin{tabular}{|c|c|c|c|c|}
      \hline
      \textbf{Name} & \textbf{Equation} & \textbf{Phase space} & \textbf{Parameter ranges} & \textbf{Periodic attractor condition} \\
      \hline
      Simple Oscillator & $\dot{r} = r(a-r^2); \dot{\theta} = \omega$ & $x_1,x_2\in[-1,1]$ & $a \in [-0.5,0.5]$, $\omega\in [-1,1]$ &  $a>0$ \\
      \hline
      Supercritical Hopf & $\dot{r} = \mu r - r^3$; $\dot{\theta} = \omega + br^2$ & $x_1,x_2\in[-1,1]$ & $\mu \in [-1,1]$, $\omega\in [-1,1]$, $b\in [-1,1]$ &  $\mu>0$ \\
      \hline
      Li\'enard Polynomial & $\dot{x}_1 = x_2$; $\dot{x}_2 = -(a x_1 + x_1^3) -(c + x_1^2) x_2$ & $x_1,x_2\in[-4.2,4.2]$ & $ a\in [0,1]$, $c\in[-1,1]$ & $c<0$ \\
      \hline
      Li\'enard Sigmoid & $\dot{x}_1 = x_2$; $\dot{x}_2 = -(1 / (1 + e^{-ax_1}) - 0.5) -(b + x_1^2) x_2$ & $x_1,x_2\in[-1.5,1.5]$ & $ a\in [0,1]$, $b\in [-1,1]$ & $b<0$ \\
      \hline
      Van der Pol & $\dot{x}_1 = x_2; \dot{x}_2 = \mu x_2 - x_1 - x_1^2x_2$ & $x_1,x_2\in[-3,3]$ & $\mu \in [-1,1]$ & $\mu > 0$ \\
      \hline
      BZ Reaction & $\dot{x}_1 = a - x_1 - \frac{4x_1x_2}{1 + x_1^2}$; $\dot{x}_2 = b x_1 \left(1 - \frac{x_2}{1 + {x_1}^2}\right)$ & $x_1\in[0,10]$, $x_2\in[0,20]$ & $a\in [2,19]$, $b\in [2, 6]$ & $a<\frac{3a}{5} - \frac{25}{a}$ \\
      \hline
      Sel\'kov & $\dot{x}_1 = x_1 + ax_2 + x_1^2 x_2$; $\dot{x}_2 = b - a x_2 - x_1^2 x_2$ & $x_1,x_2\in[0,3]$ & $ a\in [0.01,0.11]$, $b\in [0.02,1.2]$ & (*) \\
      \hline
    \end{tabular}
  }
\end{table}

\begin{figure}[htb!]
    \centering
    \includegraphics[width=\textwidth]{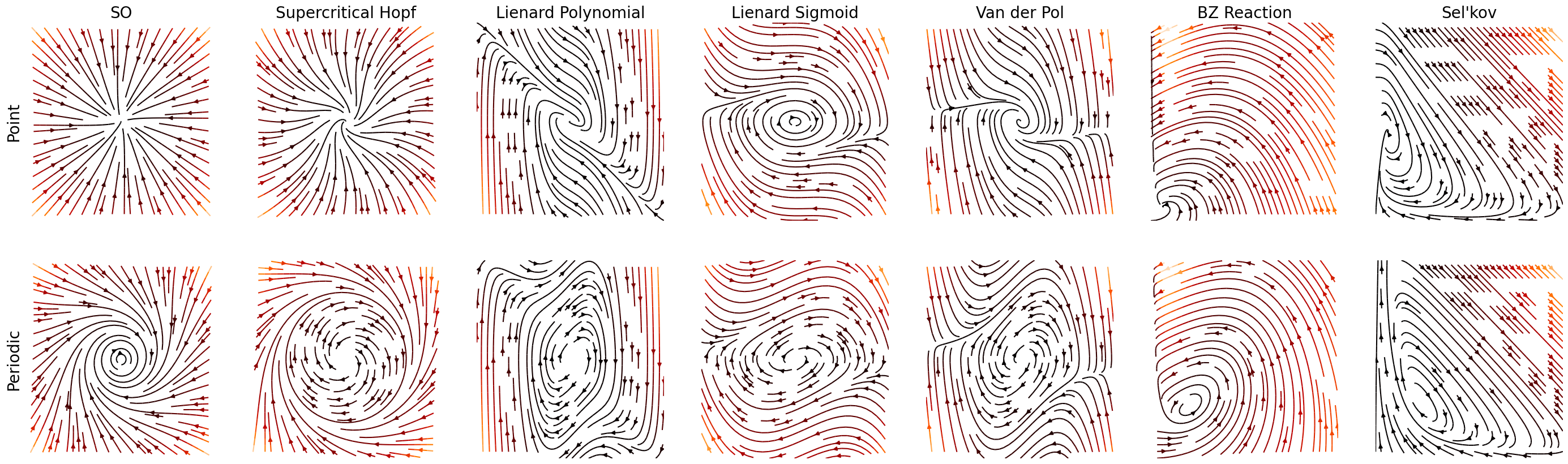}
    \caption{\emph{Pre- and post-Hopf examples.} Vector fields visualizations as streamplots of random instances of each system pre- and post-Hopf bifurcation (shown in top and bottom panels respectively).
    }
    \label{fig:othersystems}
\end{figure}

To assess the parametric proximity of each system to its bifurcation boundary, we calculate the parameter distance as the minimum L2 distance from the bifurcation curve. In the left portion of \fref{fig:selkovhard}, we present these signed distances (negative, or positive, distances indicate pre-Hopf, or post-Hopf, systems respectively) for the nonlinear bifurcation curves of the Sel\'kov system. While parameters fully determine the bifurcation behavior, inferring this behavior from vector fields and trajectories of systems near bifurcations can be misleading. We showcase here instances of the Sel\'kov systems, each depicted by its  vector field and trajectories starting near and away from the fixed point. 
We can see that for high $b$ values, despite having a point attractor, trajectories exhibit periodic-like behaviors, which can be misleadingly similar to the behavior of a periodic attractor.

\begin{figure}[htb!]
     \centering
     \includegraphics[width=\textwidth]{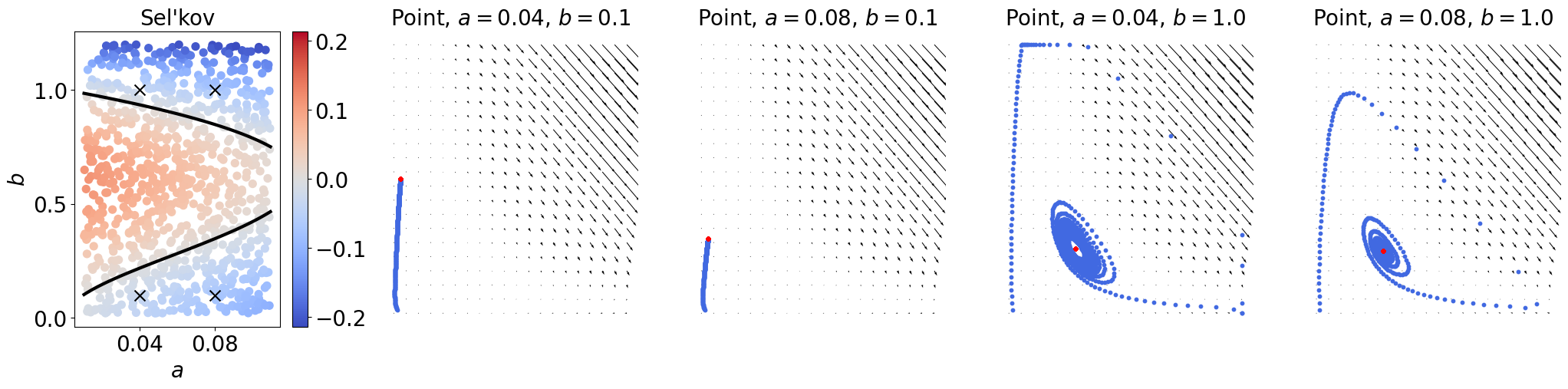}
     \caption{\emph{Trajectories of pre-Hopf Sel\'kov instances}. 
     We examine four instances of the Sel\'kov system. We show the parameter space (left), its true boundary (black line) and the four samples marked by `x'. A score (color) is assigned to each system based on how close it is to a bifurcation (minimal distance from the bifurcation curve), and whether it is pre- or post-Hopf bifurcation (of negative or positive scores, respectively). Colors are scaled between the maximum absolute value (red suggesting a cycle) and its negation (blue) with white at zero value. On the right, we show vector fields with a flow  $F(x_0, T)$ where $x_0$ is away (blue), or near (red) the fixed point. We set $T=100$ and interpolation is over timesteps of $0.1$.
     }
    \label{fig:selkovhard}
\end{figure}

We tested our proposed method on noisy data which was subjected to additive zero-centered Gaussian noise ($\mu=0$) and scaling $\sigma=0.1$ (see \fref{fig:noise} and \tref{tab:accuracynoise}) and on increasing scaling of $\sigma$ for Augmented SO (see \tref{tab:noise}).

\begin{figure}[htb!]
    \centering
    \includegraphics[width=\textwidth]{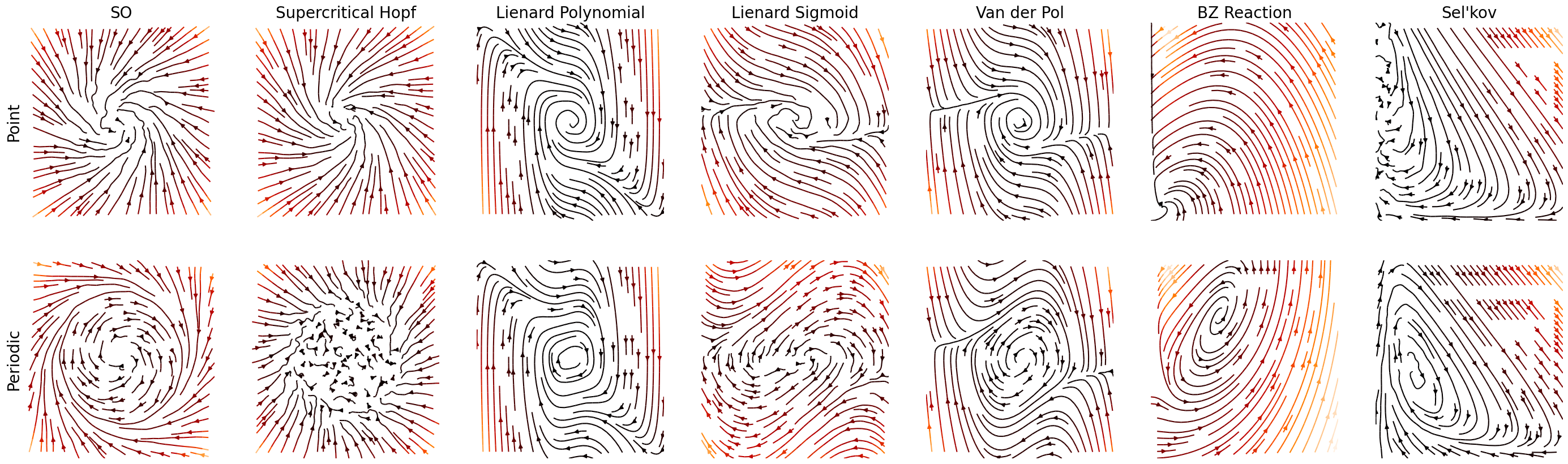}
    \caption{\emph{Perturbed pre- and post-Hopf examples.} Gaussian noise of $\sigma=0.1$ added to vector fields of pre- and post-Hopf examples (shown in top and bottom panels respectively) of each system.
    }
    \label{fig:noise}
\end{figure}

\subsection{Baselines} \label{supp:baselines}

The advantage of learning a topology-aware representation is emphasized by comparison to other across-system representations, namely:  (1) ``Critical Points'', (2)``Lyapunov'', (3)``Phase2vec'', (4) ``Autoencoder'', and (5)``Parameters''. 

\textbf{Critical Points.}
Since we focus here on Hopf bifurcations, a point and a periodic attractor can be distinguished by the identity of the fixed point, being an attractor for the former and a repeller for the latter. \citet{helman1989criticalpts,helman1991criticalpts} devised an algorithm for detecting and characterizing such critical points from an autoencoder representation, later implemented in \url{https://github.com/zaman13/Vector-Field-Topolgy-2D}. 
Running this method in practice, we find that for a point attractor, a single point is often detected but for a periodic attractor, the algorithm mistakenly outputs multiple coordinates as fixed points besides its repeller (mean number of points for point attractor is 1.00 and for periodic attractor 1.50). We therefore adjust the rule to classify a system as a periodic attractor if some identified critical point is of repelling type. 
Using this measure, we find that the classification of the synthetic systems is nearly perfect (see \tref{tab:hyperparams}), but breaks immediately with noise; with Gaussian noise of $\sigma=0.1$ added to the Augmented SO dataset, accuracy drops to 59\%. 


\textbf{Lyapunov.} 
Given time-series data, the maximal Lyapunov exponent, $\lambda_1$, captures the greatest separation of infinitesimally close trajectories across an n-dimensional space. Systems with $\lambda_1<0$ are considered in a state of fixed point, whereas periodic states have $\lambda=0$ \citep{stoker1950lyapunov}. \citet{zhang2009lyapunov} uses this measure in the context of Hopf bifurcations towards the design of rotational machines. 
To evaluate this measure, we integrate each system from a random point at an interval of $0.1$ up to time $T = 100$ based on the vector field and compute the maximal Lyapunov exponent, $\lambda_1$, of the trajectory using the python library \url{https://pypi.org/project/nolds}. 
We find the optimal threshold of $\lambda_1$ for classification based on Augmented SO ROC curve and use it across all systems (see \tref{tab:hyperparams}). 

\textbf{Phase2vec.}
For Phase2vec we follow the tutorial of data generation and training as described in \citet{ricci2022phase2vec} and implemented in \url{https://github.com/nitzanlab/phase2vec}. That is, we generate 10K train samples of sparse polynomial equations and train the phase2vec framework consisting of a convolutional encoder mapped by a single linear layer into a $d=100$-dimensional embedding space and consequently decoded via another linear layer into predicted parameter values which are multiplied by the polynomial library to reconstruct the vector fields. We then train a linear classifier of pre-Hopf and post-Hopf  classes using the same Augmented SO dataset as used in our current framework and select for optimized hyperparameters (see \tref{tab:hyperparams}). 

\textbf{Autoencoder.}
For Autoencoder we deploy an autoencoder architecture consisting of an encoder of three convolutional blocks (kernel size of $3 \times 3$, a stride of $2 \times 2$, and 128 channels in each layer), followed by a linear layer into 10 latent dimensions, and a decoder mirroring the encoder architecture. The autoencoder is trained on the same data used in phase2vec with ADAM optimizer with a learning rate of $1 \times 10^{-4}$, for 20 epochs. We then train a linear classifier of pre-Hopf and post-Hopf classes using the same Augmented SO dataset as used in our current framework and select for optimized hyperparameters (see \tref{tab:hyperparams}). 

\textbf{Parameters.}
For Parameters, a vector of 20 coefficients, $(c_0, c_1,..., c_9, d_0, d_1,..., d_9)$, is fitted using least squares to reconstruct the two-dimensional vector field $(\dot{x}, \dot{y})$ with $(q_3(x,y| \mathbf{c}), q_3(x,y| \mathbf{d}))$ where
$q_3(x,y| \mathbf{c}) = c_0 + c_1 x + c_2 y + c_3 x^2 + c_4 y^2 + c_5 xy + c_6 x^3 + c_7 y^3 + c_8 x^2y + c_9 xy^2$. We chose polynomial representation of degree 3 following earlier work (\citet{iben2021taylor}) which showed this value was sufficient for dynamical systems similar to those we investigate here. We then train a linear classifier of pre-Hopf and post-Hopf classes using the same Augmented SO dataset as used in our current framework and select for optimized hyperparameters (see \tref{tab:hyperparams}).

\begin{table}[htb!]
  \centering
  \captionsetup{justification=centering}
  \caption{Test accuracy of baselines and their hyperparameter tuning, \\ default: learning rate=1e-4, num epochs = 20.}
  \label{tab:hyperparams}
  \small
  \resizebox{\textwidth}{!}{
    \begin{tabular}{l*{9}{c}}
      \toprule
      & SO & Augmented SO & Supercritical Hopf & Lienard Poly & Lienard Sigmoid & Van der Pol & BZ Reaction & Sel'kov & Average \\
      \midrule
      Our Model & 0.97$\pm$0.01 & 0.93$\pm$0.01 & 0.98$\pm$0.01 & 0.85$\pm$0.14 & 0.92$\pm$0.1 & 0.84$\pm$0.14 & 0.84$\pm$0.09 & 0.65$\pm$0.03 & 0.87 \\
      learning rate = 1e-3 & 0.95$\pm$0.09 & 0.91$\pm$0.09 & 0.96$\pm$0.09 & 0.84$\pm$0.17 & 0.95$\pm$0.09 & 0.89$\pm$0.1 & 0.73$\pm$0.12 & 0.65$\pm$0.07 & 0.86 \\
      learning rate = 1e-5 & 0.97$\pm$0.01 & 0.86$\pm$0.02 & 0.98$\pm$0.01 & 0.88$\pm$0.1 & 0.79$\pm$0.13 & 0.76$\pm$0.14 & 0.53$\pm$0.09 & 0.53$\pm$0.07 & 0.79 \\
      num epochs = 10 & 0.97$\pm$0.01 & 0.92$\pm$0.01 & 0.98$\pm$0.01 & 0.88$\pm$0.12 & 0.86$\pm$0.13 & 0.83$\pm$0.15 & 0.74$\pm$0.13 & 0.63$\pm$0.04 & 0.85 \\
      num epochs = 40 & 0.98$\pm$0.01 & 0.93$\pm$0.01 & 0.98$\pm$0.01 & 0.86$\pm$0.13 & 0.93$\pm$0.06 & 0.85$\pm$0.14 & 0.81$\pm$0.11 & 0.67$\pm$0.06 & 0.88 \\
    \midrule
      Critical Points & 1.00 & 0.93 & 1.00 & 1.00 & 1.00 & 1.00 & 0.70 & 0.92 & 0.94 \\
      \midrule
      Lyapunov & 0.85 & 0.71 & 0.89 & 0.89 & 0.87 & 0.84 & 0.78 & 0.68 & 0.81 \\
      \midrule
      Phase2vec & 0.89$\pm$0.06 & 0.86$\pm$0.02 & 0.91$\pm$0.02 & 0.51$\pm$0.12 & 0.48$\pm$0.0 & 0.5$\pm$0.05 & 0.5$\pm$0.0 & 0.49$\pm$0.0 & 0.64 \\
      learning rate = 1e-3 & 0.9$\pm$0.05 & 0.87$\pm$0.03 & 0.93$\pm$0.02 & 0.61$\pm$0.16 & 0.7$\pm$0.14 & 0.76$\pm$0.17 & 0.51$\pm$0.05 & 0.49$\pm$0.02 & 0.72 \\
      learning rate = 1e-5 & 0.85$\pm$0.03 & 0.73$\pm$0.01 & 0.94$\pm$0.01 & 0.47$\pm$0.0 & 0.56$\pm$0.07 & 0.48$\pm$0.0 & 0.51$\pm$0.04 & 0.49$\pm$0.0 & 0.63 \\
      num epochs = 10 & 0.81$\pm$0.06 & 0.82$\pm$0.02 & 0.91$\pm$0.02 & 0.47$\pm$0.0 & 0.48$\pm$0.0 & 0.48$\pm$0.0 & 0.5$\pm$0.0 & 0.49$\pm$0.0 & 0.62 \\
      num epochs = 40 & 0.92$\pm$0.04 & 0.87$\pm$0.02 & 0.9$\pm$0.01 & 0.49$\pm$0.07 & 0.5$\pm$0.05 & 0.5$\pm$0.06 & 0.5$\pm$0.0 & 0.49$\pm$0.0 & 0.65 \\
      \midrule
      Autoencoder & 0.93$\pm$0.03 & 0.86$\pm$0.01 & 0.96$\pm$0.01 & 0.65$\pm$0.18 & 0.92$\pm$0.08 & 0.85$\pm$0.11 & 0.83$\pm$0.13 & 0.5$\pm$0.02 & 0.81 \\
      learning rate = 1e-3 & 0.94$\pm$0.03 & 0.92$\pm$0.01 & 0.96$\pm$0.01 & 0.63$\pm$0.15 & 0.96$\pm$0.04 & 0.8$\pm$0.17 & 0.64$\pm$0.15 & 0.64$\pm$0.1 & 0.81 \\
      learning rate = 1e-5 & 0.93$\pm$0.04 & 0.65$\pm$0.03 & 0.96$\pm$0.01 & 0.47$\pm$0.02 & 0.8$\pm$0.16 & 0.54$\pm$0.11 & 0.53$\pm$0.11 & 0.52$\pm$0.09 & 0.67 \\
      num epochs = 10 & 0.9$\pm$0.03 & 0.82$\pm$0.02 & 0.95$\pm$0.01 & 0.52$\pm$0.12 & 0.88$\pm$0.12 & 0.6$\pm$0.17 & 0.71$\pm$0.14 & 0.49$\pm$0.0 & 0.73 \\
      num epochs = 40 & 0.95$\pm$0.02 & 0.9$\pm$0.01 & 0.96$\pm$0.01 & 0.53$\pm$0.01 & 0.92$\pm$0.08 & 0.7$\pm$0.13 & 0.69$\pm$0.13 & 0.56$\pm$0.09 & 0.78 \\
      \midrule
      Parameters & 0.53$\pm$0.05 & 0.52$\pm$0.03 & 0.56$\pm$0.06 & 0.52$\pm$0.11 & 0.53$\pm$0.1 & 0.54$\pm$0.12 & 0.53$\pm$0.1 & 0.52$\pm$0.07 & 0.53 \\
      learning rate = 1e-3 & 0.5$\pm$0.0 & 0.57$\pm$0.01 & 0.49$\pm$0.01 & 0.47$\pm$0.0 & 0.48$\pm$0.0 & 0.48$\pm$0.0 & 0.5$\pm$0.0 & 0.49$\pm$0.0 & 0.5 \\
      learning rate = 1e-5 & 0.5$\pm$0.08 & 0.5$\pm$0.03 & 0.5$\pm$0.13 & 0.51$\pm$0.12 & 0.49$\pm$0.18 & 0.5$\pm$0.13 & 0.51$\pm$0.12 & 0.51$\pm$0.08 & 0.5 \\
      num epochs = 10 & 0.53$\pm$0.07 & 0.5$\pm$0.03 & 0.54$\pm$0.07 & 0.51$\pm$0.12 & 0.51$\pm$0.2 & 0.53$\pm$0.15 & 0.49$\pm$0.07 & 0.5$\pm$0.04 & 0.51 \\
      num epochs = 40 & 0.54$\pm$0.07 & 0.56$\pm$0.02 & 0.57$\pm$0.08 & 0.52$\pm$0.1 & 0.55$\pm$0.14 & 0.55$\pm$0.13 & 0.54$\pm$0.11 & 0.5$\pm$0.05 & 0.54 \\
      \bottomrule
    \end{tabular}
  }
\end{table}

\subsection{Ablations of our model} \label{supp:ablation}
To evaluate the impact of our architecture, 
we compare the results with ablated versions of our full model, namely: 
(1) ``No Attention'' - where we exclude the two self-attention layers, resulting in a network that is purely feedforward, (2)  ``From Vectors'' - our model but trained on raw vector fields instead of transformed angles as inputs, i.e. with two channels in the first convolutional layer, (3)  ``No Augmentations'' - our model but trained on the unaugmented SO data, and (4) ``CNN-baseline'' - a model including all ablations, see \tref{tab:ablations}.

\begin{table}[htb!]
\centering
\caption{Test accuracy on ablated models.}
\label{tab:ablations}
\small
\resizebox{\textwidth}{!}{
\begin{tabular}{l*{9}{c}}
\toprule
 & SO & Augmented SO & Supercritical Hopf & Li\'enard Poly & Li\'enard Sigmoid & Van der Pol & BZ Reaction & Sel'kov & Average \\
 \midrule
Our Model & 0.97$\pm$0.01 & \textbf{0.93$\pm$0.01} & 0.98$\pm$0.01 & 0.85$\pm$0.14 & 0.92$\pm$0.1 & 0.84$\pm$0.14 & \textbf{0.84$\pm$0.09} & \textbf{0.65$\pm$0.03} & \textbf{0.87} \\
No Attention & 0.97±0.01 & 0.92±0.01 & 0.98±0.01 & 0.63±0.13 & 0.66±0.12 & 0.53±0.07 & \textbf{0.84±0.09} & 0.62±0.05 & 0.77 \\
From Vectors & 0.97±0.02 & 0.92±0.01 & 0.98±0.01 & 0.6±0.12 & \textbf{0.96±0.03} & \textbf{0.93±0.05} & 0.62±0.15 & 0.57±0.05 & 0.82 \\
No Augmentation & \textbf{0.99±0.0} & 0.45±0.01 & \textbf{0.99±0.0} & \textbf{0.94±0.0} & \textbf{0.96±0.0} & 0.88±0.01 & 0.5±0.01 & 0.52±0.05 & 0.78 \\
CNN-baseline & 0.98±0.01 & 0.55±0.01 & \textbf{0.99±0.01} & 0.59±0.13 & 0.77±0.14 & 0.61±0.1 & 0.45±0.09 & 0.49±0.0 & 0.68 \\
\bottomrule
\end{tabular}
}
\end{table}

We observe good performance when training on classical SO for systems that resemble the SO in having their fixed point located persistently at the center. In fact, as we increase the boundary parameter $B$ of the neural spline augmentations, which directly affects the location distribution of the fixed point, we observe the change in test accuracy for two systems: the BZ reaction system, which exhibits a peripheral fixed point (with $x_1$ ranging between 0.6 and 3.8 within the $[0,10]$ limits)
; and the Liénard Polynomial system, which has a centrally located fixed point, see \fref{fig:fixedpointlocation} and \tref{tab:fixedpointlocation}.

  \begin{figure}[!htb]
    \centering
    \begin{minipage}[c]{0.4\textwidth}
        \includegraphics[width=0.7\textwidth]{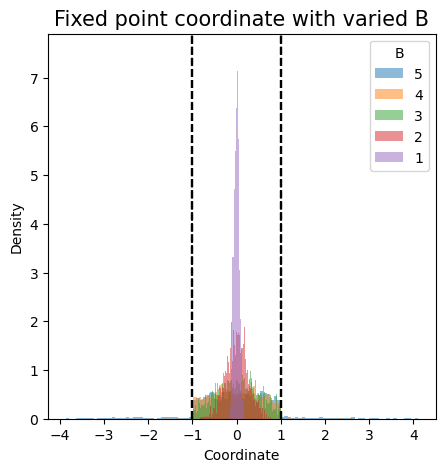}
    \end{minipage}
    \begin{minipage}[c]{0.5\textwidth}
        \centering
        \begin{adjustbox}{valign=c}
        \small
        \resizebox{0.9\textwidth}{!}{%
        \begin{tabular}{l*{6}{c}}
        \toprule
        B &     1 &     2 &     3 &     4 & 5\\
        \midrule
        BZ Reaction  &  0.42 &  0.65 &  0.79 &  0.88 & 0.71\\
        Li\'enard Poly &  0.92 &  0.90 &  0.86 &  0.79 & 0.89\\
        \bottomrule
        \end{tabular}
        }
        \end{adjustbox}
    \end{minipage}
    \captionlistentry[table]{A table beside a figure}
    \label{tab:fixedpointlocation}
    \captionsetup{labelformat=andtable}
    \caption{Effect of $B$, the boundary parameter in Neural Spline Flows, on the fixed point coordinate distribution (see Figure, dashed black lines mark the train data boundaries) and on test accuracy for BZ Reaction and Li\'enard Polynomial (see Table). 
    }
    \label{fig:fixedpointlocation}
\end{figure}

Furthermore, we examine the rules learned by the attention layers. We sum the values of the first attention mask and expand it to the original grid dimensions. Through this analysis, we observe that the attention is often sparse, focusing on the fixed point or its immediate neighbors, as depicted in  \fref{fig:attention2}. This observation raises intriguing follow-up questions like whether higher accuracy correlates with specific patterns in the attention masks. 



\begin{figure}[htb!]
    \centering
    \includegraphics[width=0.6\textwidth]{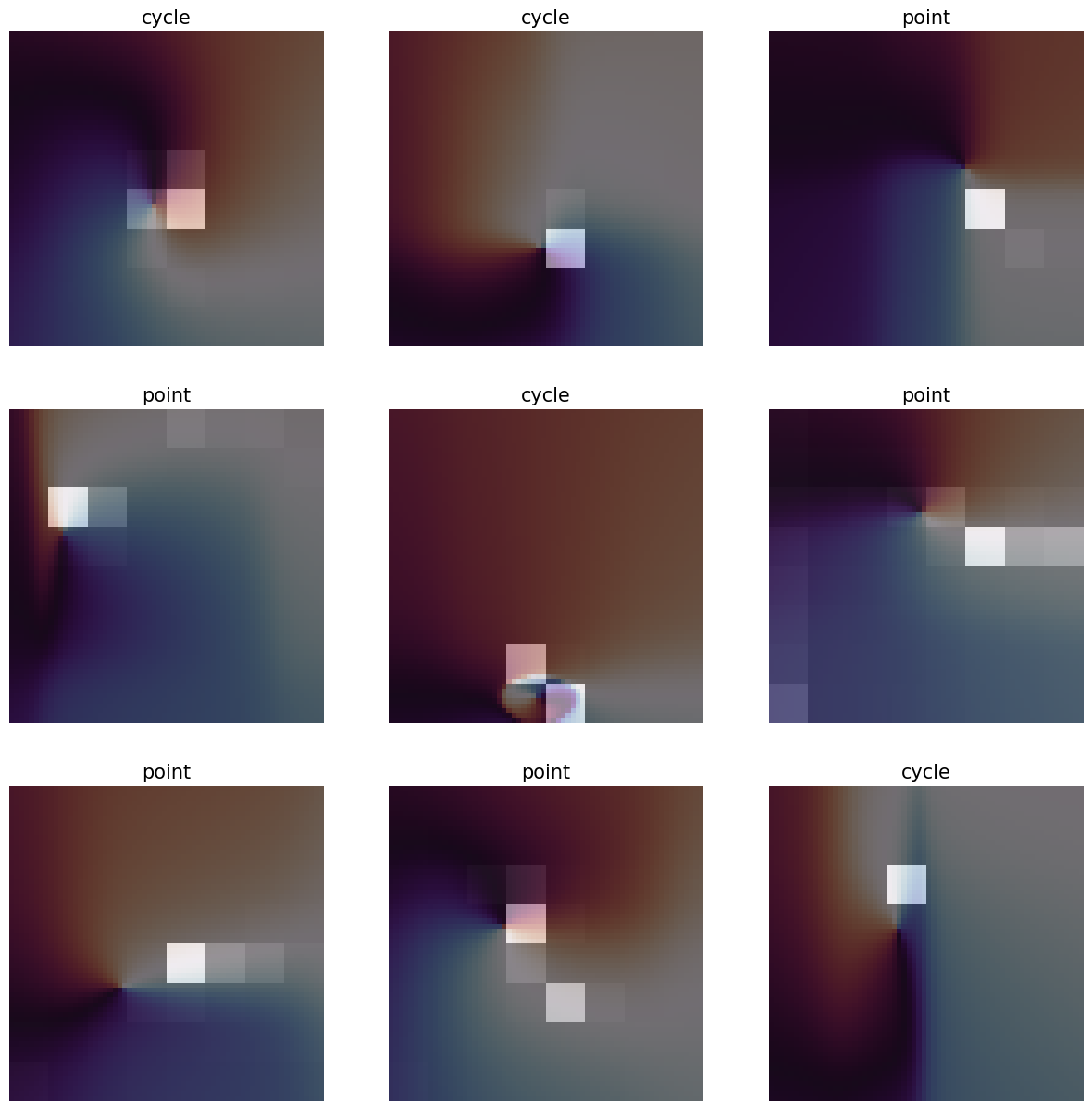}
    \caption{\emph{Attention learned}. Mask of the first attention layer is diluted back to the original grid dimensions for a random (but constant) point shown in grayscale (standardized by minimum and maximum values in black and white respectively). 
    }
    \label{fig:attention2}
\end{figure}

\textbf{Data size and resolution}\label{supp:datasize}
We assayed the average accuracy as a function of data size, showing that a substantially smaller train data size (2k instead of 10k) is sufficient in most cases, see \tref{tab:datasize}. In the case of a higher sampling resolution in space, our method is directly applicable. Indeed, for the current resolution (64x64), training was quick, at $\sim40$s, while training at twice the resolution resulted in a training time of $\sim130$s  and the same test accuracy (87\% average accuracy).

\begin{table}[htb!]
\centering
\caption{Average test accuracy with different train data sizes. 
}
\label{tab:datasize}
\small
\resizebox{0.5\textwidth}{!}{\begin{tabular}{l*{6}{c}}
\toprule
Train data size (k) & 1&2 &  4 &  8 & 16 \\

\midrule
Average accuracy &0.7 & 0.84 &	0.85&	0.87&	0.89\\
\bottomrule
\end{tabular}}
\end{table}

\textbf{Other architectural choices.} \label{supp:archchoices}
We observe robust performance upon editing our network architecture, see \tref{tab:archchoices}.   

\begin{table}[htb!]
\centering
\caption{Test accuracy of alternative representations varying neural network architecture parameters.}
\label{tab:archchoices}
\resizebox{\textwidth}{!}{ 
\begin{tabular}{l*{9}{c}}
\toprule
& SO & Augmented SO & Supercritical Hopf & Lienard Poly & Lienard Sigmoid & Van der Pol & BZ Reaction & Sel'kov & Average \\
 \midrule
Our Model & 0.97±0.01 & 0.93±0.01 & 0.98±0.01 & 0.85±0.14 & 0.92±0.1 & 0.84±0.14 & 0.84±0.09 & 0.65±0.03 & 0.87 \\
dropout = 0.5 & 0.98±0.01 & 0.93±0.01 & 0.98±0.01 & 0.84±0.17 & 0.93±0.06 & 0.8±0.14 & 0.76±0.12 & 0.68±0.06 & 0.86 \\
kernel size = 5 & 0.97±0.01 & 0.92±0.01 & 0.98±0.01 & 0.93±0.05 & 0.93±0.03 & 0.87±0.07 & 0.72±0.11 & 0.6±0.05 & 0.87 \\
latent dimension = 5 & 0.98±0.01 & 0.93±0.01 & 0.99±0.01 & 0.88±0.12 & 0.93±0.08 & 0.85±0.12 & 0.79±0.11 & 0.65±0.04 & 0.87 \\
latent dimension = 20 & 0.97±0.01 & 0.93±0.01 & 0.98±0.01 & 0.84±0.15 & 0.91±0.09 & 0.87±0.08 & 0.82±0.11 & 0.65±0.05 & 0.87 \\
conv channels = 32 & 0.98±0.01 & 0.92±0.01 & 0.99±0.01 & 0.86±0.13 & 0.91±0.08 & 0.87±0.09 & 0.76±0.11 & 0.67±0.06 & 0.87 \\
conv layers = 3 & 0.97±0.01 & 0.92±0.01 & 0.98±0.01 & 0.88±0.12 & 0.92±0.1 & 0.86±0.14 & 0.79±0.11 & 0.66±0.04 & 0.87 \\
\bottomrule
\end{tabular}

}
\end{table}

\subsection{Recovering bifurcation boundaries}\label{supp:boundary}
In \fref{fig:confidence} we plot for each system our model's confidence as a function of its parametric proximity to its nearest bifurcation point. In \fref{fig:bifuralt} we depict the bifurcation diagrams for the alternative trained baselines.  

\begin{figure}[htb!]
    \centering
     \centering
     \includegraphics[width=\textwidth]{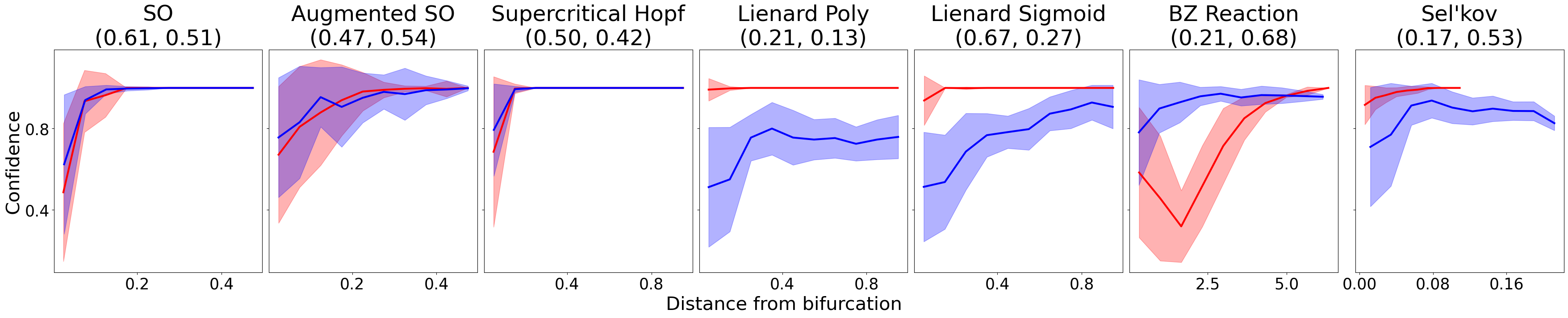}
     \caption{\emph{Low confidence near bifurcation boundaries.} For each system, we plot mean (line) and standard deviation (shaded region) of the confidence over the 50 training runs as a function of the parametric distance from the bifurcation boundary for pre-Hopf (blue) and post-Hopf (red) systems with the respective Spearman's correlations noted in the title. 
     }
    \label{fig:confidence}
\end{figure}

\begin{figure}[htb!]
    \centering
    \begin{subfigure}[b]{\textwidth}
         \centering
         \includegraphics[width=\textwidth]{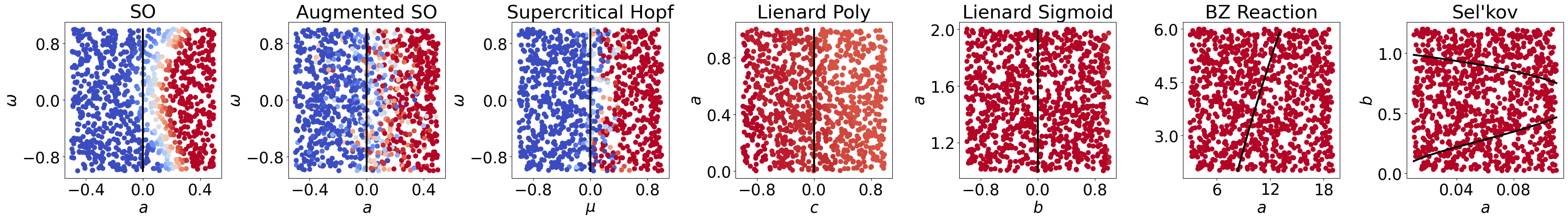}
         \caption{}
         \label{fig:bifurp2v}
     \end{subfigure}
    \begin{subfigure}[b]{\textwidth}
         \centering
         \includegraphics[width=\textwidth]{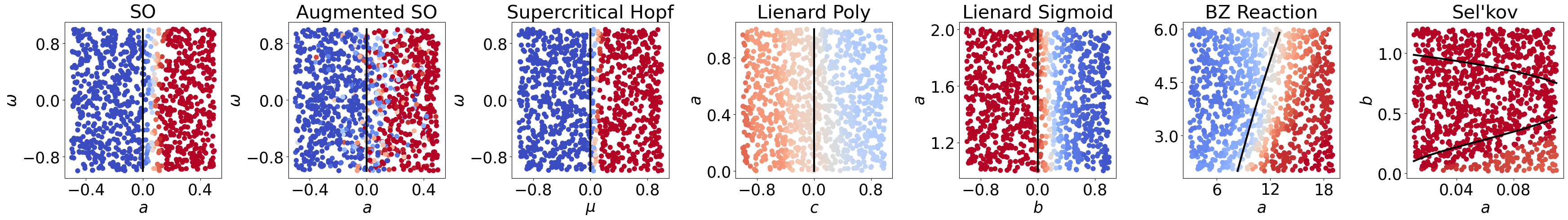}
         \caption{}
         \label{fig:bifurvf}
     \end{subfigure}
     \begin{subfigure}[b]{\textwidth}
         \centering
         \includegraphics[width=\textwidth]{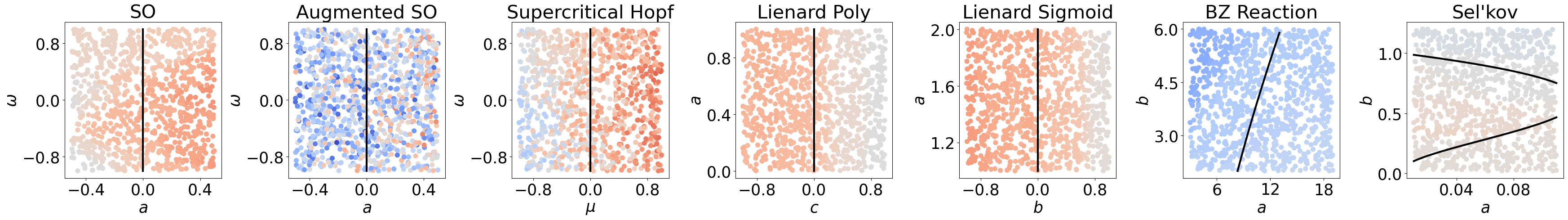}
         \caption{}
         \label{fig:bifurparams}
     \end{subfigure}
     \caption{\emph{Inference of bifurcation boundary for alternative representations}. As in \fref{fig:bifurdistpred}, each system is plotted at coordinates of its simulated parameter values and colored according to its average cycle prediction across 50 training re-runs based on \emph{(a)} Phase2vec, \emph{(b)} Autoencoder, and \emph{(c)} Parameters. The true parametric bifurcation boundary is plotted in black. Colors are scaled between a perfect cycle classification (red) and perfect point classification (blue) with white at evenly split classification.
     }
    \label{fig:bifuralt}
\end{figure}

\subsection{Co-occurrence of attractors}\label{supp:cooccur}
There are systems where both point and periodic attractors are present, for example in Subcritical Hopf bifurcation \citep{strogatz1994dynamics}:

\begin{align*}
    \dot{r} = \mu r + r^3 - r^5,\\
    \dot{\theta} = \omega + b r^2
\end{align*}

\noindent with a point attractor where $\mu<-0.25$, a point and a periodic attractor where $-0.25 < \mu < 0$, and a periodic attractor where $\mu>0$. 
The average (across runs) point logit our framework outputs for each regime is, respectively: 1.32, -6.72, -11.03. 


\subsection{Repressilator system}\label{supp:repressilator}
Gene regulation of the repressilator system is formulated as a set of six equations, or three sets of the following two equation \citep{elowitz2000repressilator}: 

\begin{align*}
        \dot{m_i} = -m_i + \frac{\alpha}{1 + p_j^n} + \alpha_0,  \\
        \dot{p_i} = -\beta(p_i - m_i) 
\end{align*}
        
\noindent where $m_i$, or $p_i$ describe the mRNA, or protein concentration, of gene $i$ which is one of LacI, TetR and $\lambda$ phage cI genes. The mRNA expression of gene $i$ is inhibited by the protein levels of gene $j$ for the corresponding gene pairs of $i$ = LacI, TetR, cI and $j$ = cI, LacI, TetR. $\alpha_0$ is the rate of leaky mRNA expression (that is, transcription taking place despite being entirely repressed), $\alpha$ is transcription rate without repression, $\beta$ is the ratio of protein and mRNA degradation rates, and $n$ is the Hill coefficient.

We set the parameters $\alpha_0=0.2, n=2$, and consider trajectories of cells of varying transcription rate $\alpha\in (0,30)$, and of varying ratio of mRNA and protein degradation $\beta\in (0,10)$. A supercritical Hopf bifurcation occurs across both boundaries, 

\begin{align*}
\beta_1 = \frac{3A^2 - 4A - 8}{4A+8} + \frac{A\sqrt{9A^2 - 24A-48}}{4A+8},\\
\beta_2 = \frac{3A^2 - 4A - 8}{4A+8} - \frac{A\sqrt{9A^2 - 24A-48}}{4A+8}
\end{align*}

\noindent where $A=\frac{-\alpha n \hat{p}^{(n-1)}}{(1 + \hat{p}^n)^2}$ and $\hat{p}= \frac{\alpha}{1 + \hat{p}^n} + \alpha_0$, \citep{verdugo2018repressilatorhopf}.

We plot these boundaries in \fref{fig:repressilator} and \fref{fig:repressilatorhard}. The discontinuity between $\beta_1$ and $\beta_2$ in the plots is due to our discretized computation of the boundary; we take a grid of $\alpha$ values and solve for their upper ($\beta_1$) and lower ($\beta_2$) boundaries. 

To generate a vector field we simulate a trajectory of 50 timesteps sampled at 0.1 intervals starting at $(m_{LacI},p_{LacI},m_{TetR},p_{TetR},m_{cI},p_{cI}) = (2.11, 2.28, 1.57, 1.71, 1.07, 1.14)$, see examples in \fref{fig:repressilatorhard}. We sample 100 cells from this trajectory, add Gaussian noise of $\sigma=0.5$ and evaluate the six-dimensional vectors of these cell states. In \fref{fig:repressilatorvectors} we show examples of these cell vectors as they are projected onto the TetR-LacI protein plane together with their theoretical pre-Hopf or post-Hopf categorization. We then interpolate these vectors to a 64$\times$64 grid (\fref{fig:repressilatorinterp}), and follow-up with angle transformation (\fref{fig:repressilatorangles}).

\begin{figure}[htb!]
    \centering
     \centering
     \includegraphics[width=\textwidth]{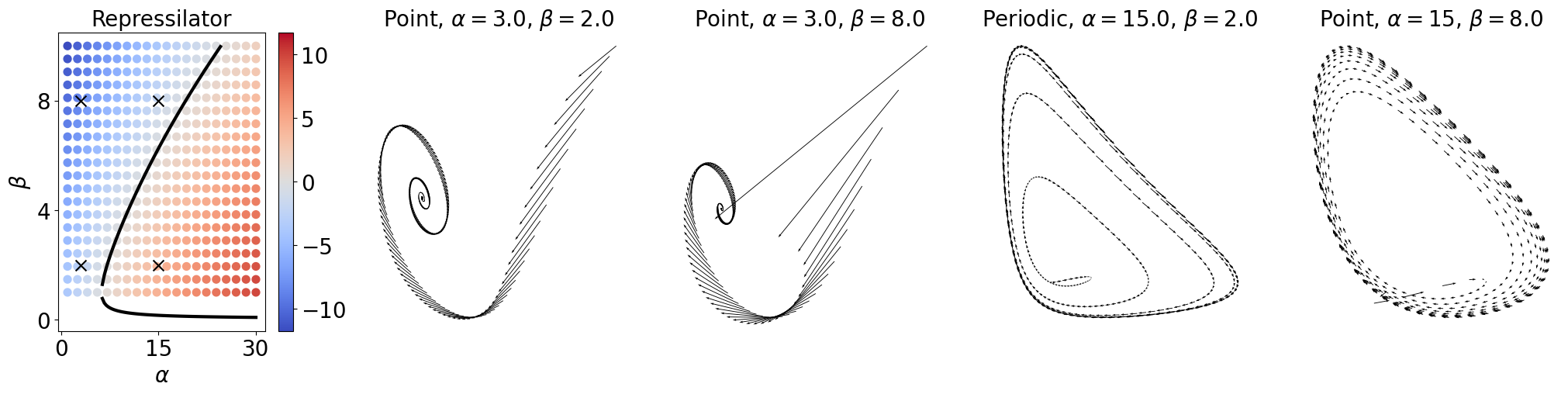}
     \caption{\emph{Examples of repressilator trajectories}. 
     We examine four cellular trajectories of the repressilator system. We show the parameter space (left), its true boundary (black line) and the four samples marked as `x'. A score (color) is assigned to each system based on how close it is to a bifurcation (minimal distance from the bifurcation curve), and whether it is pre- or post-Hopf bifurcation (of negative or positive scores, respectively). Colors are scaled between the maximum absolute value (red suggesting a cycle) and its negation (blue) with white at zero value. On the right, we show complete cellular trajectories with  $T=50$ and intervals of $0.1$.
     }
    \label{fig:repressilatorhard}
\end{figure}

\begin{figure}[htb!]
    \centering
    \begin{subfigure}[b]{0.65\textwidth}
         \centering
         \includegraphics[width=\textwidth]{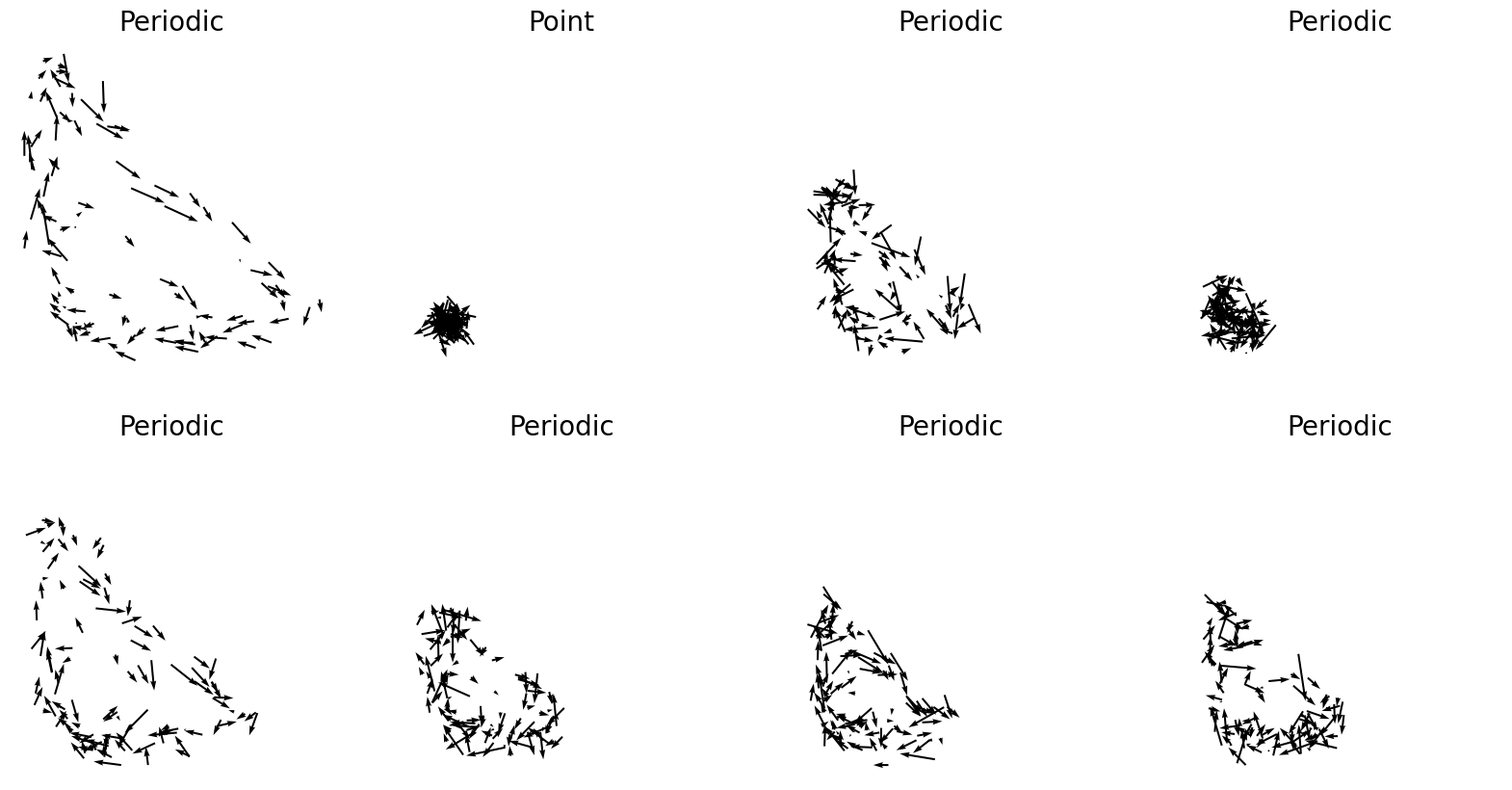}
         \caption{}
         \label{fig:repressilatorvectors}
     \end{subfigure}
     \begin{subfigure}[b]{0.65\textwidth}
         \centering
         \includegraphics[width=\textwidth]{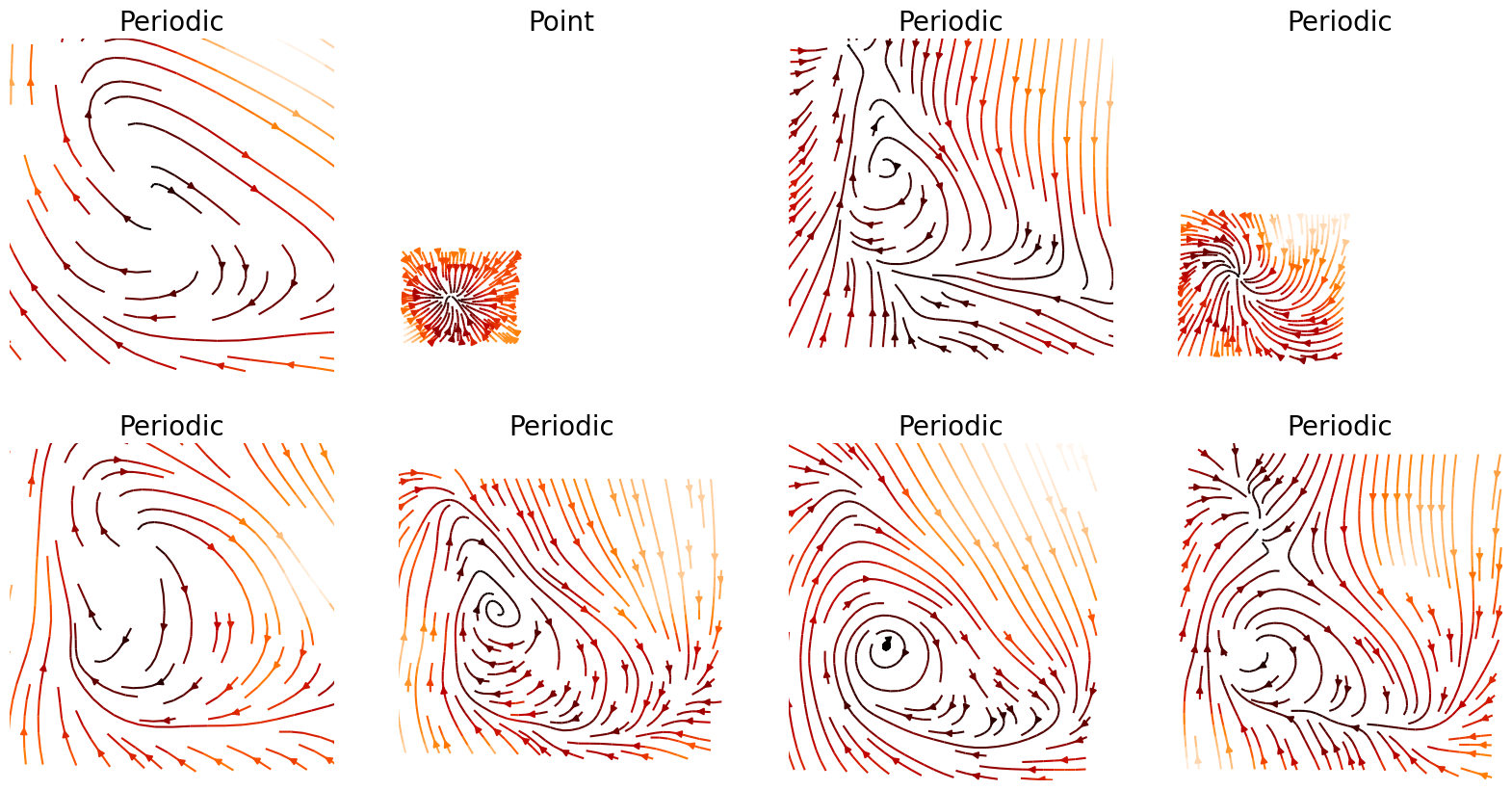}
         \caption{}
         \label{fig:repressilatorinterp}
     \end{subfigure}
     \begin{subfigure}[b]{0.65\textwidth}
         \centering
         \includegraphics[width=\textwidth]{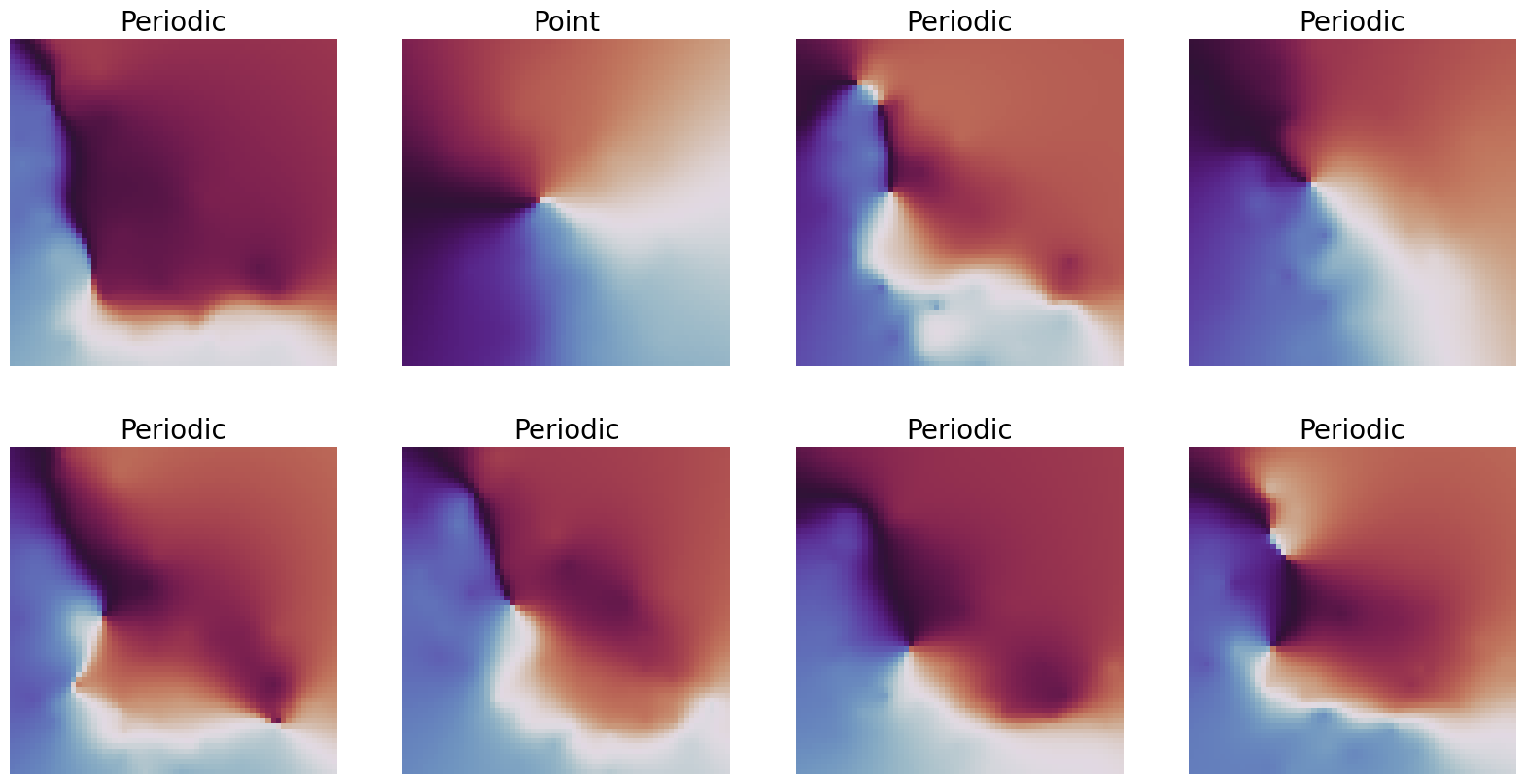}
         \caption{}
         \label{fig:repressilatorangles}
     \end{subfigure}
     \caption{\emph{Repressilator simulated samples.} We show here the data generation process for the repressilator system of eight random instances  (distinct $\alpha, \beta$ values). We simulate noised trajectories of 100 cells and project their velocities onto the TetR and LacI protein plane \emph{(a)}. From these, we interpolate vector values to a 64-by-64 grid \emph{(b)} and image their corresponding angles \emph{(c)}. Titles indicate the theoretical pre- and post-Hopf classification of each sample.}
    \label{fig:repressilatorsamples}
\end{figure}

\subsection{Pancreas dataset}\label{supp:pancreas}
We obtain the data, compute high-dimensional gene expression velocities per cell and their projection into UMAP space, and each cell's cell-cycle score as described in scVelo \citep{bergen2020scvelo}, RNA velocity basics \href{https://scvelo.readthedocs.io/en/stable/VelocityBasics/}{tutorial}. Since cell types present distinct dynamics (\fref{fig:pancreastypesvelo}), in order to enrich our dataset, we partition cells into batches of at least 50 cells of common cell type (\fref{fig:pancreasrandombinvelo}). We then generate samples for our classifier by interpolating their sparse two-dimensional velocities to a 64-by-64 grid (\fref{fig:pancreasrandombininterpvelo}), which are later transformed into angle representations (\fref{fig:pancreasrandombinangle}). From these, we predict using our classifier whether cells demonstrate cyclic dynamics, where in this case, such dynamics are characteristic of the cell cycle. To test our evaluation, we aggregate individual cell cycle scores into a population score as the fraction of cells that are undergoing S-phase, where a score threshold of 0.4 is employed (\fref{fig:pancreasrandombincc}). 

\begin{figure}[htb!]
    \centering
    \includegraphics[width=0.8\textwidth]{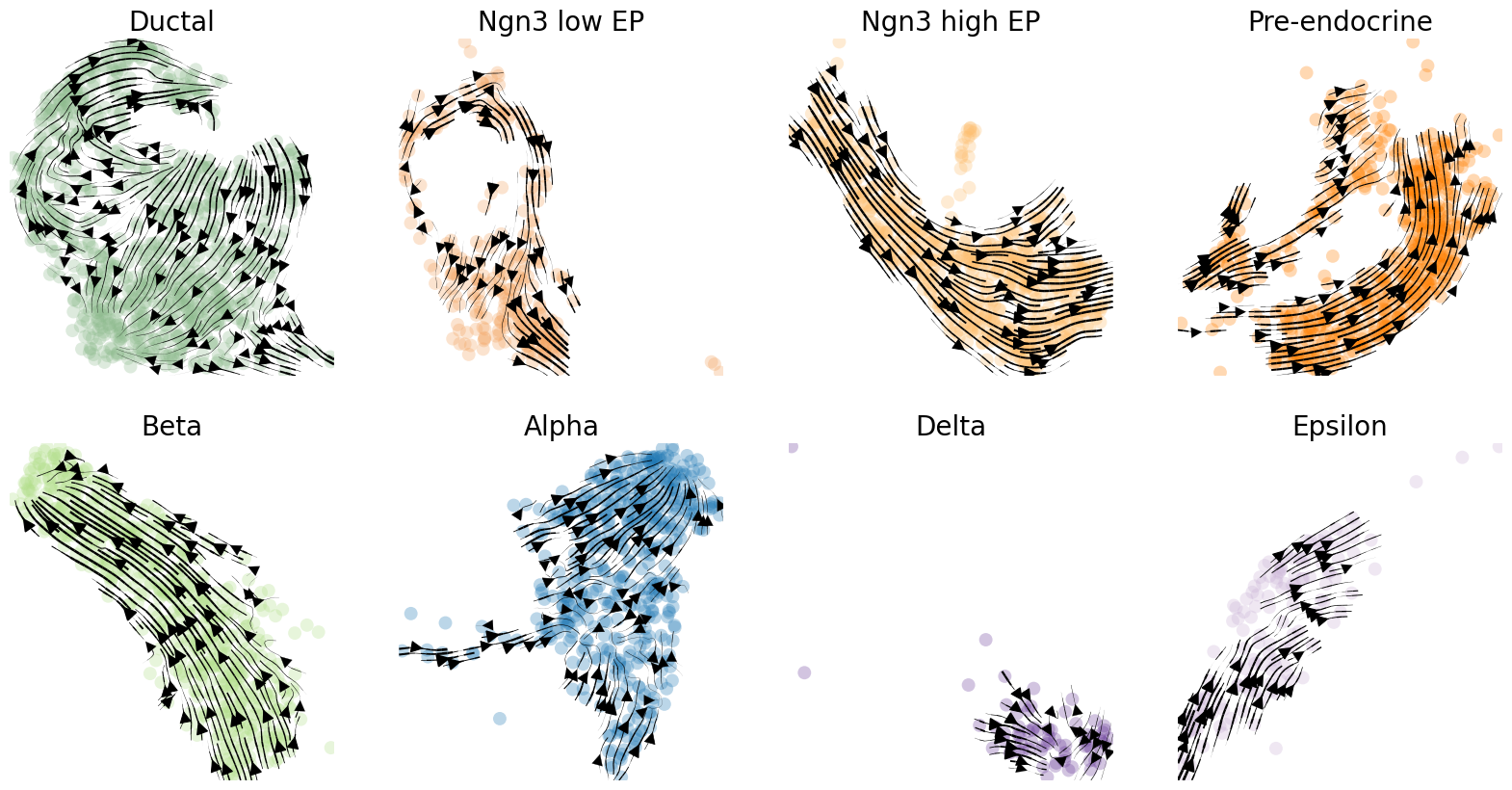}
    \caption{\emph{Cell type velocities. } Depicting the velocities in UMAP representation for each cell type separately.
    }
    \label{fig:pancreastypesvelo}
\end{figure}

\begin{figure}[htb!]
    \centering
    \begin{subfigure}[b]{0.6\textwidth}
         \centering
         \includegraphics[width=\textwidth]{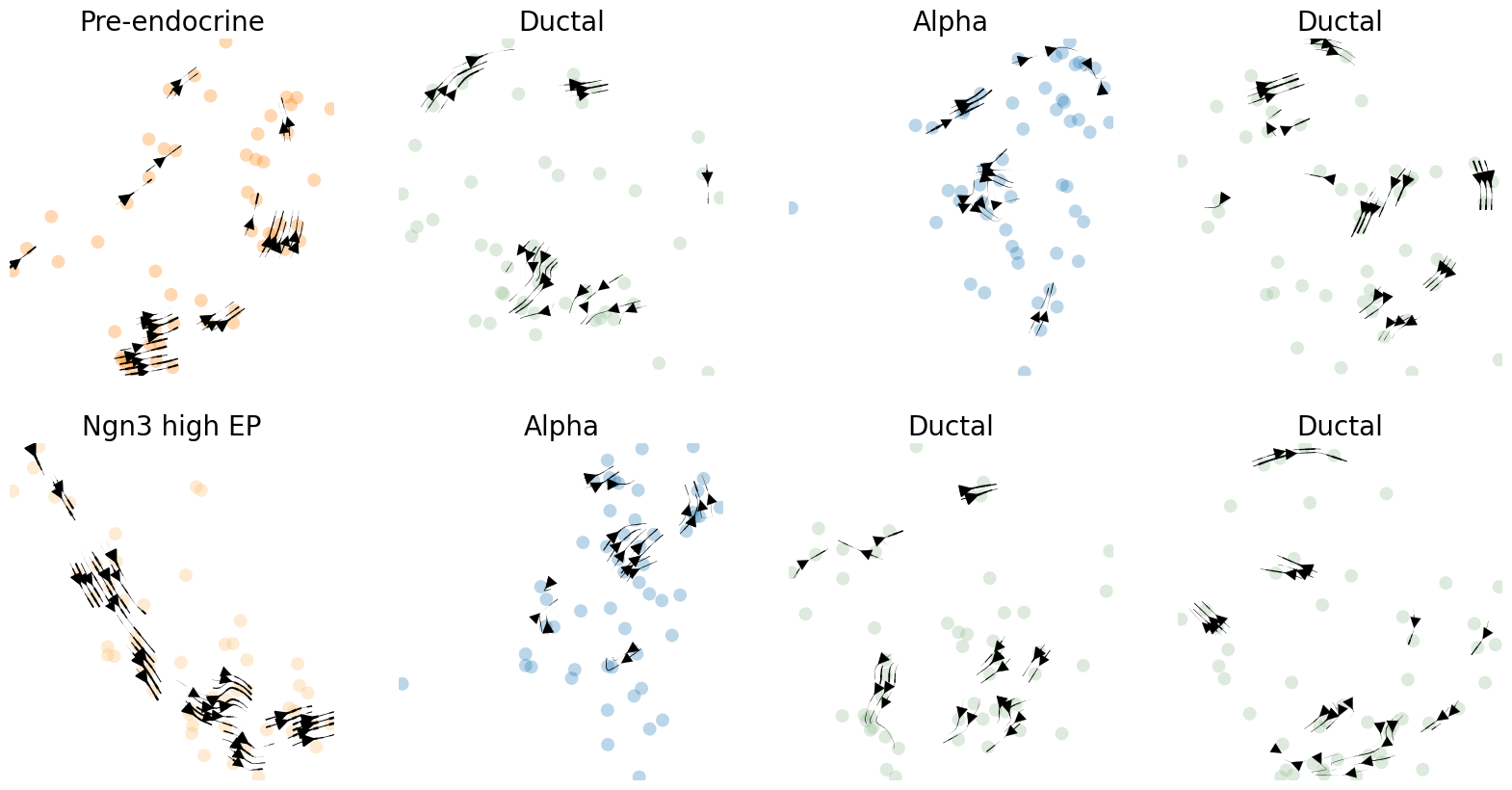}
         \caption{}
         \label{fig:pancreasrandombinvelo}
     \end{subfigure}
     \begin{subfigure}[b]{0.6\textwidth}
         \centering
         \includegraphics[width=\textwidth]{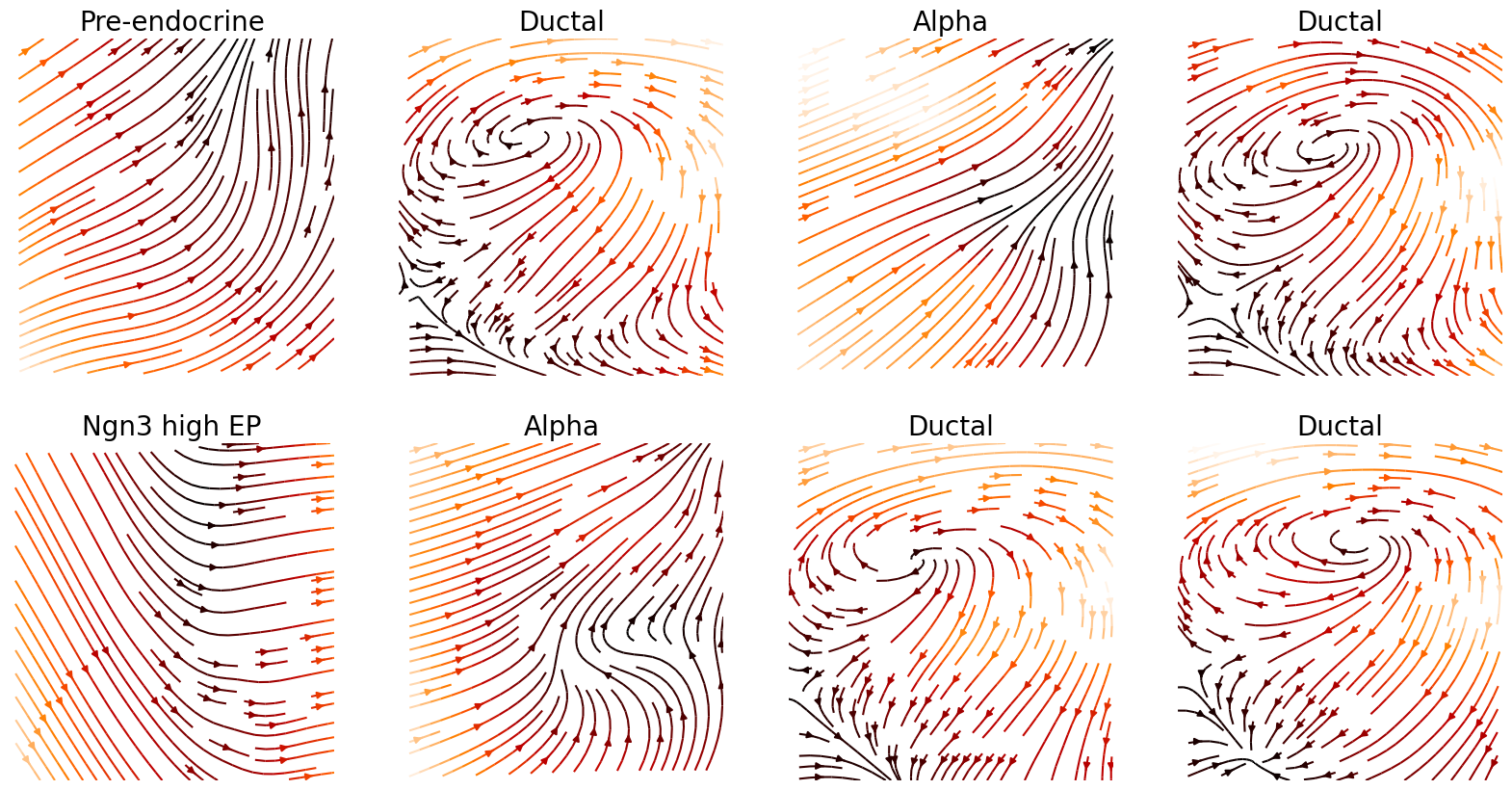}
         \caption{}
         \label{fig:pancreasrandombininterpvelo}
     \end{subfigure}
     \begin{subfigure}[b]{0.6\textwidth}
         \centering
         \includegraphics[width=\textwidth]{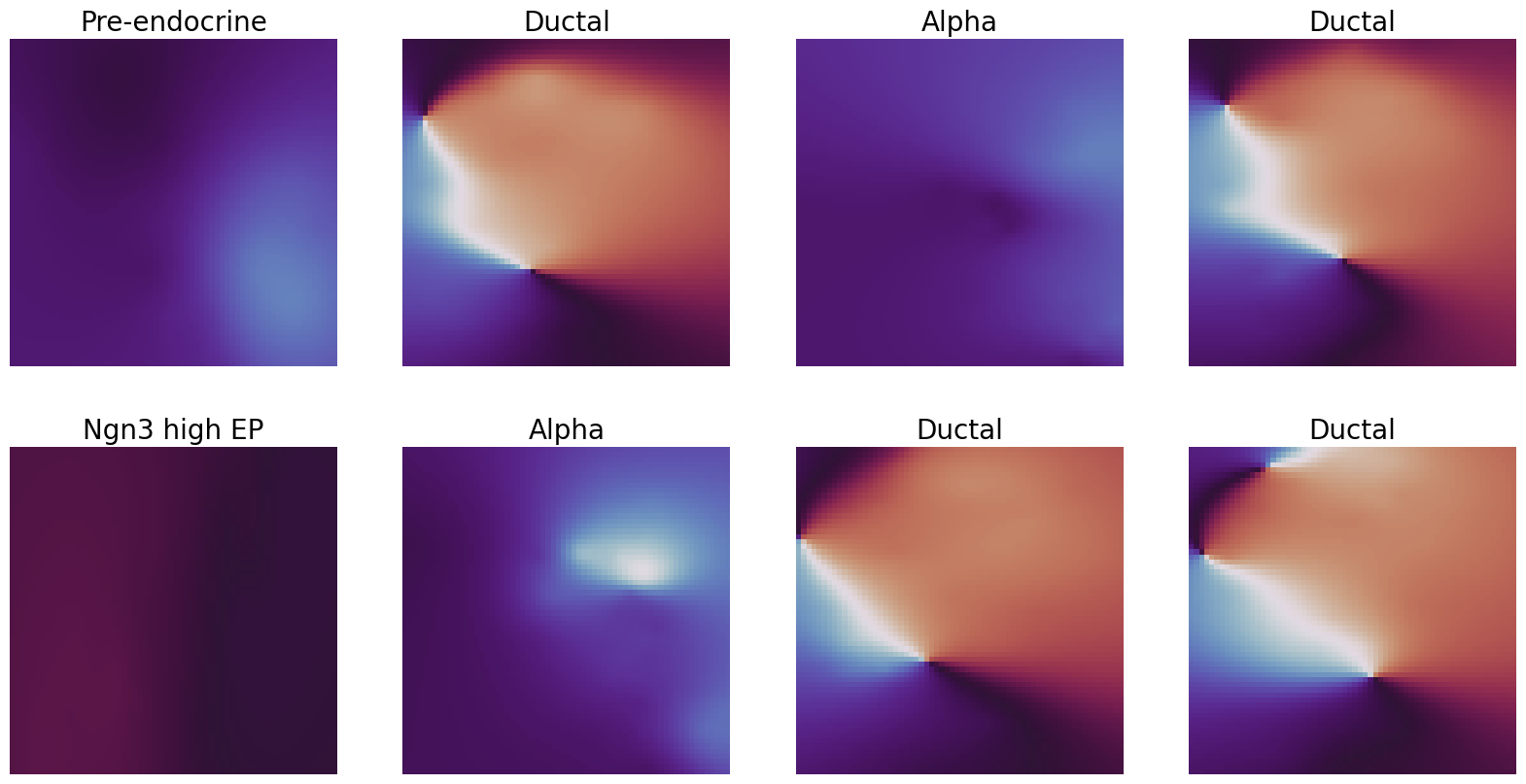}
         \caption{}
         \label{fig:pancreasrandombinangle}
     \end{subfigure}
     \begin{subfigure}[b]{0.6\textwidth}
         \centering
         \includegraphics[width=\textwidth]{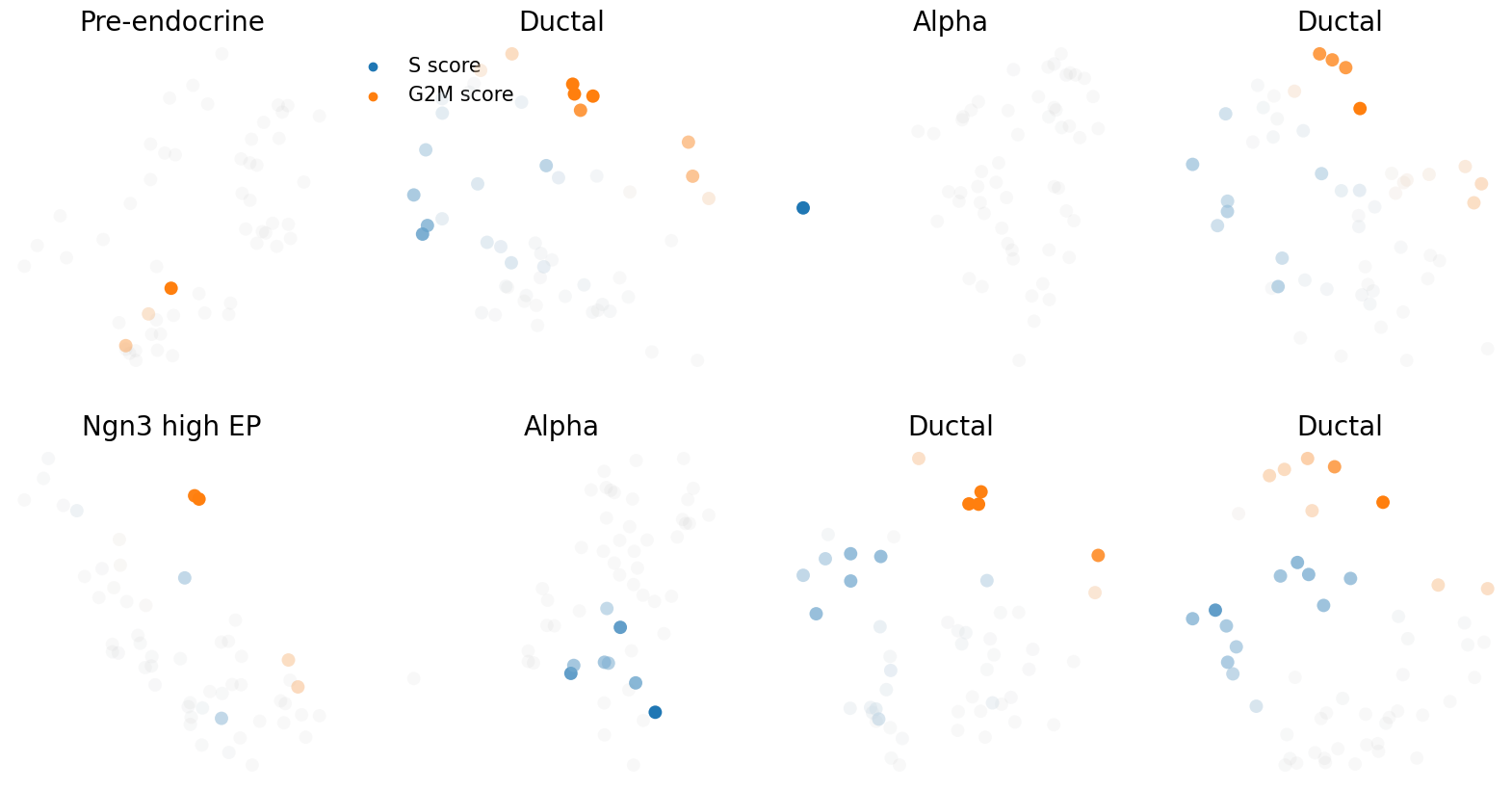}
         \caption{}
         \label{fig:pancreasrandombincc}
     \end{subfigure}
     \caption{\emph{Pancreas data samples.} We randomly split cells of common cell type into groups of 50 or more cells. Based on their velocities \emph{(a)}, we compute interpolated vectors \emph{(b)} and their angles \emph{(c)}. We compute a proxy for the cell-cycle signal in each sample, which we consider as the fraction of cells undergoing S-phase according to their S-score value \emph{(d)}.}
    \label{fig:pancreasrandombin}
\end{figure}

\end{document}